\def\isarxiv{1} 
\theoremstyle{plain}
\newtheorem{theorem}{Theorem}[section]
\newtheorem{lemma}[theorem]{Lemma}
\newtheorem{definition}[theorem]{Definition}
\newtheorem{corollary}[theorem]{Corollary}
\newtheorem{assumption}[theorem]{Assumption}
\newtheorem{fact}[theorem]{Fact}
\newtheorem{remark}[theorem]{Remark}
\newcommand{\R}{\mathbb{R}}
\renewcommand{\d}{\mathrm{d}}
\newcommand*{\RN}[1]{\expandafter\@slowromancap\romannumeral #1@}
\begin{document}

\ifdefined\isarxiv

\date{}

\title{Grams: Gradient Descent with Adaptive Momentum Scaling}

\author{
Yang Cao\thanks{\texttt{ycao4@wyomingseminary.org}. Wyoming Seminary.}
\and
Xiaoyu Li\thanks{\texttt{xiaoyu.li2@student.unsw.edu.au}. University of New South Wales.}
\and
Zhao Song\thanks{\texttt{magic.linuxkde@gmail.com}. Simons Institute for the Theory of Computing, University of California, Berkeley.}
}

\else

\title{Grams: Gradient Descent with Adaptive Momentum Scaling}

\author{
Yang Cao\thanks{\texttt{ycao4@wyomingseminary.org}. Wyoming Seminary.}
\and
Xiaoyu Li\thanks{\texttt{xiaoyu.li2@student.unsw.edu.au}. University of New South Wales.}
\and
Zhao Song\thanks{\texttt{magic.linuxkde@gmail.com}. Simons Institute for the Theory of Computing, University of California, Berkeley.}
}









\fi

\ifdefined\isarxiv
\begin{titlepage}
  \maketitle
  \begin{abstract}
We introduce \textbf{Gr}adient Descent with \textbf{A}daptive \textbf{M}omentum \textbf{S}caling (\textbf{Grams}), a novel optimization algorithm that decouples the direction and magnitude of parameter updates in deep learning. Unlike traditional optimizers that directly integrate momentum into updates, Grams separates the update direction, derived from current gradients, from momentum, which is used solely for adaptive magnitude scaling. This approach enables Grams to achieve improved loss descent compared to state-of-the-art cautious and momentum-based optimizers. We theoretically demonstrate that Grams descents faster than other state-of-the-art optimizers and establish a global convergence guarantee for Grams. We also validate its effectiveness through extensive empirical evaluations. The results demonstrate Grams’ superior performance, including faster convergence and better generalization, compared to widely-used optimizers such as Adam, Lion, and their cautious variants. Our results highlight Grams' potential as a transformative approach for efficiently training large language models. Code is available at \href{https://github.com/Gunale0926/Grams}{https://github.com/Gunale0926/Grams}.

  \end{abstract}
  \thispagestyle{empty}
\end{titlepage}

{\hypersetup{linkcolor=black}
\tableofcontents
}
\newpage

\else

\maketitle
\begin{abstract}

\end{abstract}

\fi

\section{Introduction}

Optimization plays a pivotal role in modern machine learning, serving as the cornerstone for training and fine-tuning models across diverse applications. Over the past decade, the introduction of adaptive optimizers like Adam~\cite{kb14} and its variant AdamW~\cite{lh17} has significantly shaped the landscape of optimization. These algorithms have become the de facto choices for a variety of tasks, ranging from pre-training Large Language Models (LLMs)~\cite{tli+23} to fine-tuning models for text-to-image diffusion~\cite{rbl+22}. Despite the advent of new methods, AdamW has maintained its dominance, particularly in large-scale training regimes, thanks to its robust convergence properties and general applicability.

The era of LLMs has ushered in unprecedented scaling of model sizes, demanding billions or even trillions of parameters~\cite{aaa+23}. This scaling places an immense burden on computational resources, intensifying the need for efficient optimization strategies. A faster optimizer directly translates to the ability to process more training tokens within a fixed time budget, leading to the development of more capable models~\cite{kmh+20}. This necessity has rekindled interest in identifying optimizers that can surpass AdamW in terms of speed, memory efficiency, and convergence guarantees.

Recent innovations, such as SHAMPOO~\cite{gks18}, Schedule Free~\cite{dym+24}, Lion~\cite{clh+24}, SOAP~\cite{vmz+24}, and ADOPT~\cite{thm+24}, have pushed the boundaries of optimization by introducing novel update rules, momentum mechanisms, and regularization techniques. These methods promise substantial improvements in training efficiency and model performance, particularly in specialized scenarios. The cautious~\cite{lcll24} mechanism addresses optimization challenges by adaptively masking the momentum term $u_t$ to align with the gradient $g_t$, preventing conflicts that hinder training. This approach extends to Adam and Lion, resulting in variants like Cautious Adam (C-Adam) and Cautious Lion (C-Lion).

\begin{figure}[!ht]
\centering
\includegraphics[width=\columnwidth]{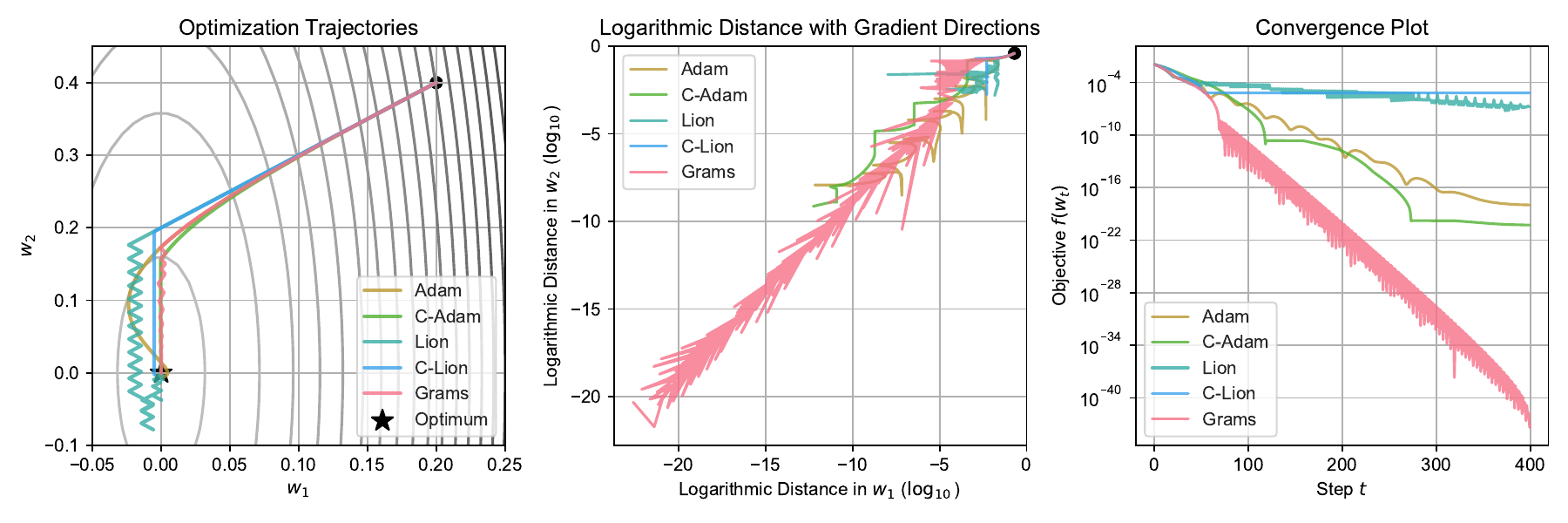}
\caption{Convergence comparison on a simple convex function $f(w) := (0.5 w_1)^2 + (0.1 w_2)^2$. Learning rate $\eta = 0.01$ for Grams, Adam, and C-Adam, and $\eta = 0.001$ for Lion and C-Lion. $\beta_1$ and $\beta_2$ are default values for all optimizers. The graph on the left is the optimizing trajectories; the graph in the middle graph is the distance between current weight and optimum weight; the graph on the right is the training objectives.}
\label{fig:smp_convex}
\end{figure}

In this paper, we propose Gradient Descent with Adaptive Momentum Scaling (Grams), a novel optimization algorithm designed to address the limitations of existing methods. Unlike traditional optimizers that directly couple momentum with gradient updates, Grams decouples the direction and magnitude of parameter updates. This approach allows the update direction to be derived solely from current gradients while momentum is utilized to scale the update magnitude. Such decoupling enhances stability and robustness, particularly in dynamic optimization landscapes.

Figure~\ref{fig:smp_convex} illustrates the superior convergence properties of Grams compared to other state-of-the-art optimizers on a simple convex function. In the left graph, the optimization trajectories of Grams exhibit a combination of characteristics observed in Lion and C-Adam. Specifically, Grams follows a shortcut-like path similar to C-Adam while also demonstrating a zigzagging behavior reminiscent of Lion. However, unlike Lion, which deviates significantly from the optimal path due to its pronounced zigzagging updates, Grams maintains a more controlled trajectory, effectively balancing stability and efficiency during optimization. The middle graph illustrates the logarithmic distance of the weights $w_1$  and $w_2$  from the optimum. Here, Grams consistently demonstrates a faster descent compared to other optimizers, indicating superior efficiency in reducing the distance to the optimal solution. The right graph displays the convergence of the objective function value over training steps, where Grams achieves a notably faster reduction and lower final objective value than its competitors. These results collectively underscore Grams’ ability to navigate the optimization landscape effectively, outperforming traditional and Cautious optimizers in terms of speed and precision, even in a simple convex setting.

Our contributions are summarized as follows:
\begin{itemize}
    \item We introduce the Grams optimizer, which empirically outperforms existing methods such as Adam~\cite{lh17}, Lion~\cite{clh+24}, and their Cautious version~\cite{lcll24}.
    \item We establish theoretical guarantees for Grams, including discrete-time descent analysis and Hamiltonian descent analysis.
    \item We demonstrate the global convergence property of Grams in specific optimization problems under the standard assumptions.
\end{itemize}

By integrating insights from momentum-based methods, adaptive optimizers, and sign-based updates, Grams bridges the gap between theoretical rigor and practical performance, offering a promising direction for scalable and efficient optimization in modern machine learning.

\paragraph{Roadmap.} In Section~\ref{sec:related_work}, we review related work and place our approach in the context of existing optimization methods. In Section~\ref{sec:preli}, we introduce our notation system and outline key preliminary concepts necessary for understanding our method. In Section~\ref{sec:grams}, we present our main contribution, Gradient Descent with Adaptive Momentum Scaling (Grams), and provide theoretical guarantees for its performance. In Section~\ref{sec:exp}, we evaluate the effectiveness of Grams through empirical experiments on both pre-training and fine-tuning tasks, comparing its performance to state-of-the-art optimizers. In Section~\ref{sec:conclusion}, we conclude the paper and discuss future directions to enhance the capabilities of Grams further.

\section{Related Work} \label{sec:related_work}

\paragraph{Adam Variants and Memory-Efficient Optimization}
Adam and its numerous variants have been pivotal in addressing optimization challenges across diverse applications~\cite{kb14, log+19}. Among these, AdamW~\cite{log+19} introduced a crucial modification by decoupling weight decay from gradient updates, restoring the original intent of weight regularization. NAdam~\cite{d16} integrated Nesterov momentum, and AdaBelief~\cite{ztd+20} refined the second moment estimation for improved generalization. Adan~\cite{xzl+24} extended these advancements with an additional momentum term, balancing performance with memory overhead. Schedule-free optimizers~\cite{dym+24} have further simplified the optimization process by dynamically adjusting learning rates without pre-defined schedules, enhancing adaptability across tasks. More recent efforts, such as ADOPT~\cite{thm+24}, streamlined first-order momentum updates through normalization.

Memory-efficient strategies have addressed the growing resource demands of large-scale models. AdaFactor~\cite{ss18} factorize second-order statistics, achieving sublinear memory usage. K-Fac~\cite{mg15} approximates the Fisher information matrix using Kronecker-factored representations. Innovations such as fused gradient computation~\cite{lyl+23} and GaLore~\cite{zzc+24} leverage low-rank gradient structures to optimize memory efficiency.

\paragraph{Regularization Techniques}
Regularization plays a critical role in improving generalization and robustness in optimization. Lion~\cite{clh+24} introduced sign-based updates with uniform magnitudes, offering inherent noise regularization~\cite{nvl+17, fkmn21, chg22}. Earlier methods, such as signSGD~\cite{bwaa18}, explored similar ideas but focused on reducing communication costs in distributed optimization. Despite its efficiency, signSGD often underperformed in deep learning tasks, such as ConvNet training, where Lion demonstrated superior performance through advanced momentum mechanisms.

Building on these ideas, the Cautious mechanism~\cite{lcll24} adaptively masks momentum terms to ensure alignment with gradient directions, mitigating conflicts. This approach has led to new variants, including Cautious Adam (C-Adam) and Cautious Lion (C-Lion), which combine regularization benefits with robust convergence guarantees.

\paragraph{Hamiltonian Dynamics in Optimization}
Hamiltonian dynamics provides a robust theoretical framework for understanding momentum-based optimization~\cite{n83, smdh13, ncll24, a24}. The seminal work of~\cite{smdh13} provided a physical interpretation of momentum methods, linking the oscillatory behavior of algorithms like Nesterov’s and Polyak’s methods~\cite{n83} to principles of dynamical systems. While traditional gradient descent guarantees a monotonic decrease in objective function values, momentum-based methods exhibit non-monotonic dynamics that require more advanced analytical tools~\cite{jnj17}. This has motivated the development of Lyapunov-based approaches for convergence analysis in convex optimization~\cite{kbb15, wrj16}.

Recent studies have further formalized these connections by modeling optimization processes as continuous-time ODEs, uncovering inherent Hamiltonian structures~\cite{mpt+18, ncll24}. These insights have significantly enhanced the theoretical understanding of classical momentum-based algorithms and provided a foundation for exploring new optimization frameworks~\cite{a24}. Moreover, Hamiltonian principles have been extended to analyze convergence rates for accelerated methods~\cite{jnj17} and have inspired broader applications in optimization. In parallel, Mirror Descent, while distinct from Hamiltonian dynamics, leverages variational principles and maintains efficiency with a mild dependence on the dimensionality of decision variables, making it well-suited for large-scale problems~\cite{kbb15, trrb23}.

\section{Preliminaries} \label{sec:preli}

In this section, we outline foundational concepts and notations that will be referenced throughout the paper. In Section~\ref{sec:preli:notation}, we define some useful notations. In Section~\ref{sec:preli:background_opt} provides essential definitions in optimization, which are critical for understanding the theoretical guarantees of our proposed method. 
In Section~\ref{sec:preli:adam}, \ref{sec:preli:lion} and \ref{sec:preli:cautious}, we review key optimizers, including Adam~\cite{lh17}, Lion~\cite{clh+24}, and the Cautious mechanism~\cite{lcll24}. In Section~\ref{sec:preli:ham}, we summarize the Hamiltonian dynamics framework, which provides a theoretical foundation for understanding momentum-based optimization algorithms.

\subsection{Notations}\label{sec:preli:notation}
For two vectors $u, v\in \R^d$, we use $\langle u, v\rangle$ to denote the standard inner product in the Euclidean space. We use $\|u\|_2$ to denote the $\ell_2$-norm of $u$ and use $\|u\|_\infty$ to denote the $\ell_\infty$-norm of $u$. For a matrix $A$, we use $\|A\|_F$ to denote the Frobenius norm of $A$. For a twice differentiable function $f: \R^d \to \R$, we use $\nabla f(x)$ and $\nabla^2 f(x)$ to denote the gradient and Hessian of $f$, respectively. Given a vector $x \in \R^d$, we use ${\bf 1}_{x \geq 0} \in \R^d$ to denote the vector where each entry indicates whether the corresponding entry of $x$ is non-negative, i.e., for each $i \in [d]$, $({\bf 1}_{x \geq 0})_i = 1$ if $x_i \geq 0$, and $({\bf 1}_{x \geq 0})_i = 0$ otherwise. 

\subsection{Backgrounds on Optimization}\label{sec:preli:background_opt}

We define the $L$-smoothness of functions as below.
\begin{definition}[$L$-smooth]\label{def:smooth}
We say that a function $f: \R^d \to \R$ is $L$-smooth if $\|\nabla f(x_1) - \nabla f(x_2) \|_2 \leq L \|x_1 - x_2\|_2 $ for all $x_1, x_2 \in \R^d$.
\end{definition}

We state a common fact of $L$-smooth functions as follow.

\begin{fact}\label{fac:smooth_upper_bound}
    If a function $f: \R^d \to \R$ is $L$-smooth, then we have
    \begin{align*}
        f(x_2) \leq & ~ f(x_1) + \langle \nabla f(x_1), x_2 - x_1 \rangle + \frac{L}{2} \|x_2 - x_1\|_2^2, \\
        f(x_2) \geq & ~ f(x_1) + \langle \nabla f(x_1), x_2 - x_1 \rangle - \frac{L}{2} \|x_2 - x_1\|_2^2.
\end{align*}
\end{fact}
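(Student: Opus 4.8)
The plan is to establish both inequalities at once by reducing the claim to a one-dimensional identity along the line segment joining $x_1$ and $x_2$ via the fundamental theorem of calculus, and then bounding the resulting error term using only the $L$-smoothness hypothesis (Definition~\ref{def:smooth}).

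First I would introduce the auxiliary function $\phi \colon [0,1] \to \R$ given by $\phi(t) := f(x_1 + t(x_2 - x_1))$. Because $f$ is differentiable, the chain rule yields $\phi'(t) = \langle \nabla f(x_1 + t(x_2 - x_1)), x_2 - x_1 \rangle$, and since $L$-smoothness forces $\nabla f$ to be continuous, $\phi'$ is continuous on $[0,1]$; hence $f(x_2) - f(x_1) = \phi(1) - \phi(0) = \int_0^1 \phi'(t)\, \d t$. Rewriting the linear term as $\langle \nabla f(x_1), x_2 - x_1 \rangle = \int_0^1 \langle \nabla f(x_1), x_2 - x_1 \rangle\, \d t$ and subtracting gives
\begin{align*}
& f(x_2) - f(x_1) - \langle \nabla f(x_1), x_2 - x_1 \rangle \\
& \qquad = \int_0^1 \langle \nabla f(x_1 + t(x_2 - x_1)) - \nabla f(x_1),\, x_2 - x_1 \rangle\, \d t .
\end{align*}

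The key step is to bound the absolute value of the right-hand side. Applying the Cauchy--Schwarz inequality inside the integral and then $L$-smoothness with the pair of points $x_1 + t(x_2 - x_1)$ and $x_1$ (whose distance equals $t\|x_2 - x_1\|_2$), the integrand is at most $L t \|x_2 - x_1\|_2^2$ in magnitude; since $\int_0^1 L t\, \d t = L/2$, the whole difference lies in the interval $[-\frac{L}{2}\|x_2 - x_1\|_2^2,\ \frac{L}{2}\|x_2 - x_1\|_2^2]$. The upper endpoint of this interval is precisely the first inequality, and the lower endpoint is precisely the second.

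There is no serious obstacle here; the only point requiring a little care is the passage through the fundamental theorem of calculus, which is justified because $L$-smoothness makes $\nabla f$ Lipschitz, hence continuous, so $\phi \in C^1([0,1])$ and the classical Riemann-integral form of the theorem applies with no measure-theoretic subtleties. An alternative and entirely equivalent route would be to invoke the integral remainder form of Taylor's theorem applied to $\phi$ directly; I would present the route above since it makes the role of the Lipschitz constant $L$ most transparent.
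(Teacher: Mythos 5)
Your proof is correct and is the standard argument for this result. The paper itself states Fact~\ref{fac:smooth_upper_bound} without proof, treating it as a well-known consequence of $L$-smoothness (it is essentially the descent lemma together with its lower-bound counterpart), so there is no paper proof to compare against. Your line-integral argument --- writing $f(x_2)-f(x_1)-\langle \nabla f(x_1), x_2-x_1\rangle$ as $\int_0^1 \langle \nabla f(x_1+t(x_2-x_1))-\nabla f(x_1),\, x_2-x_1\rangle\,\d t$, then applying Cauchy--Schwarz and the Lipschitz bound $\|\nabla f(x_1+t(x_2-x_1))-\nabla f(x_1)\|_2 \le Lt\|x_2-x_1\|_2$ pointwise and integrating $Lt$ over $[0,1]$ --- correctly bounds the remainder by $\pm\frac{L}{2}\|x_2-x_1\|_2^2$, which yields both inequalities simultaneously. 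The justification of the fundamental theorem of calculus via continuity of $\nabla f$ (hence $\phi\in C^1$) is appropriate and complete; nothing is missing.
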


We also define PL-condition as below.

\begin{definition}[PL-condition]\label{def:pl_condition}
A function $f: \mathbb{R}^d \to \mathbb{R}$ satisfies the $\mu$-Polyak–Łojasiewicz (PL) condition with constant $\mu > 0$ if the following inequality holds for all $x \in \mathbb{R}^d$:
\begin{align*}
\|\nabla f(x)\|^2 \geq 2\mu (f(x) - f^*),
\end{align*}
where $f^*$ is the minimum value of the function $f$, i.e., $f^* = \inf_{x \in \mathbb{R}^d} f(x)$.
\end{definition}







\subsection{Sign Function}

We formally define the sign function, which will be used later in our optimizer Grams.

\begin{definition}[Sign function]\label{def:sign}
    Given a vector $a = (a_1, a_2, \dots, a_n) \in \mathbb{R}^n$, the \textit{sign function} of $a$, denoted as $\mathrm{sign}(a)$, is defined component-wise as:
    \begin{align*}
        \mathrm{sign}(a) = (\mathrm{sign}(a_1), \mathrm{sign}(a_2), \dots, \mathrm{sign}(a_n)),
    \end{align*}
    where the scalar sign function $\mathrm{sign}(a_i)$ is given by:
    \begin{align*}
        \mathrm{sign}(a_i) =
        \begin{cases}
        1, & \text{if } a_i > 0, \\
        0, & \text{if } a_i = 0, \\
        -1, & \text{if } a_i < 0.
    \end{cases}
    \end{align*}
\end{definition}




\subsection{Adam Optimizer} \label{sec:preli:adam}

Adam (Adaptive Moment Estimation)~\cite{kb14} is a widely-used optimizer that combines the benefits of RMSprop~\cite{hss12} and momentum by maintaining both first and second moment estimates of the gradients. The algorithm adapts the learning rates for each parameter using these estimates.

\begin{definition}[Adam]\label{def:adam}
The parameter update rule for Adam is given by:
\begin{align*}
m_t := & ~ \beta_1 m_{t-1} + (1-\beta_1)g_t \\
v_t := & ~ \beta_2 v_{t-1} + (1-\beta_2)g_t^2 \\
\widehat{m}_t := & ~ \frac{m_t}{1-\beta_1^t} \\
\widehat{v}_t := & ~ \frac{v_t}{1-\beta_2^t} \\
u_t := & ~ \frac{\widehat{m}_t}{\sqrt{\widehat{v}_t} + \epsilon} \\
w_{t+1} := & ~ w_t - \eta_t u_t,
\end{align*}
where $w_t$ is the weight at time step $t$, $m_t$ and $v_t$ are the first and second momentum estimates respectively, $g_t = \nabla_w\mathcal{L}_t(w_{t-1})$ is the current gradient, $\beta_1$ and $\beta_2$ are decay rates for the moment estimates, $\epsilon$ is a small constant for numerical stability, and $\eta_t$ is the learning rate at step $t$.
\end{definition}


\subsection{Lion Optimizer} \label{sec:preli:lion}

Evolved Sign Momentum (Lion)~\cite{clh+24} is an efficient optimizer that leverages momentum and sign-based updates. Lion's key innovation lies in its update rule, which combines both current and momentum gradients through sign operations.

\begin{definition}[Lion Parameter Update]\label{def:lion}
    The parameter update rule for Lion is given by:
    \begin{align*}
    u_t := & ~ \mathrm{sign}(\beta_1 m_{t-1} + (1-\beta_1)g_t)\\
    w_t := & ~ w_{t-1} - \eta_t \cdot u_t\\
    m_t := & ~ \beta_2m_{t-1} + (1 - \beta_2)g_t,
    \end{align*}
    where $w_t$ is the weight at time step $t$, $m_{t-1}$ is the momentum term, $g_t = \nabla_w\mathcal{L}_t(w_{t-1})$ is the current gradient, $\beta_1$ and $\beta_2$ are the momentum coefficients, $\eta_t$ is the learning rate at step $t$, and $\mathrm{sign}$ is defined in Definition~\ref{def:sign},
\end{definition}

Lion's efficiency stems from its memory-efficient design - it only needs to maintain a single momentum term and operates primarily through sign operations. This makes it particularly suitable for large-scale training where memory constraints are significant. The optimizer has demonstrated strong performance in training large language models and vision transformers, often achieving comparable or better results than Adam while using less memory.

\subsection{Cautious Optimizers} \label{sec:preli:cautious}

Cautious mechanism~\cite{lcll24} addresses a key challenge in optimization dynamics: when the momentum term $u_t$ moves in a different direction from the current gradient $g_t$, it can potentially impede training progress. To mitigate this issue, the Cautious mechanism introduces an adaptive masking mechanism that modifies the momentum term based on its alignment with the gradient direction. Cautious mechanism could apply to Adam and Lion, which form Cautious Adam (C-Adam) and Cautious Lion (C-Lion).

\begin{definition}[Cautious Mechanism Parameter Update] \label{def:cautious_update}
    The general parameter update rule for the Cautious mechanism is given by:
    \begin{align}
        \widehat{u}_t := & ~  u_t \circ {\bf 1}_{u_t \circ g_t \geq 0} \notag \\
        w_t := & ~ w_{t-1} - \eta_t \widehat{u}_t, \label{eq:cautious_update}
    \end{align}
    where $w_t$ is the weight at time step $t$, $\circ$ denotes Hadamard product. For C-Adam, $u_t$ is from Definition~\ref{def:adam}; For C-Lion, $u_t$ is from Definition~\ref{def:lion}. $g_t$ is the current gradient.
\end{definition}

The Cautious mechanism in Definition~\ref{def:cautious_update} modifies the parameter updates to ensure they align with the gradient direction, thereby reducing the risk of adverse updates that could impede convergence. To analyze the impact of this mechanism, we introduce Definition~\ref{def:delta_l}, which quantifies the change in the loss function after an update.

\begin{definition} \label{def:delta_l}
    For any loss function $\mathcal{L}: \R^d \to \R$, we define
    \begin{align*}
        \Delta \mathcal{L}_{w_{t+1}, w_t} := & ~ \mathcal{L}(w_{t+1}) - \mathcal{L}(w_t)
    \end{align*}
    where $w_{t+1}$ is updated from any update rule.
\end{definition}


As shown in \cite{lcll24}, the Cautious mechanism ensures that the updated parameters result in a non-negative inner product with the gradient, leading to a monotonic decrease in the loss function when the step size is sufficiently small. Specifically, using a Taylor approximation, it can be expressed as:
\begin{align*}
    \Delta\mathcal{L}_{w_{t+1},w_t} \approx -\eta_t (u_t \circ g_t)^\top \phi(u_t \circ g_t) \leq 0,
\end{align*}
where $\phi(\cdot)$ represents the alignment mask introduced by the Cautious mechanism. This guarantees that $\mathcal{L}(w_{t+1}) \leq \mathcal{L}(w_t)$, ensuring a decrease in loss.

We formalize that the expected decrease in loss when updating the parameter $w$ from optimization step $t$ to step $t+1$ can be approximated using a first-order Taylor expansion, which indicates the loss function will decrease monotonically when the step size is sufficiently small.
\begin{lemma}[Informal version of Lemma~\ref{lem:delta_l_c}] \label{lem:delta_l_c:informal}
    Suppose that $\mathcal L:\R^d \to \R$ is $L$-smooth.
    Let $\Delta \mathcal{L}_{w_{t+1}^{\mathrm{C}}, w_t}$ be defined in Definition~\ref{def:delta_l}, $w_{t+1}^{\mathrm{C}}$ is updated from $w_t$ using Definition~\ref{def:cautious_update}. Then we have the followings:

    \begin{itemize}
        \item Part 1. It holds that
    \begin{align}
        \Delta \mathcal{L}_{w_{t+1}^{\mathrm{C}}, w_t} 
        \leq & ~ -\eta_t \langle u_t \circ g_t, {\bf 1}_{u_t \circ g_t \geq 0}\rangle + \frac{L \eta_t^2}{2}\| u_t\|_2^2, \label{eq:delta_l:informal}
    \end{align}
    \item Part 2. It holds that
    \begin{align*}
            \Delta \mathcal{L}_{w_{t+1}^{\mathrm{C}}, w_t} 
        \geq & ~ -\eta_t \langle u_t \circ g_t, {\bf 1}_{u_t \circ g_t \geq 0}\rangle.
    \end{align*}
    \item Part 3. If $\eta_t \leq \frac{2}{L\|u_t\|^2_2} \langle u_t \circ g_t, {\bf 1}_{u_t \circ g_t \geq 0}\rangle$, then 
    \begin{align*}
        \Delta \mathcal{L}_{w_{t+1}^{\mathrm{C}}, w_t} \leq 0.
    \end{align*}
    \end{itemize}
\end{lemma}

Building on these findings, Theorem~\ref{thm:ham_c} delves into the Hamiltonian properties of the Cautious mechanism, providing deeper insights into its theoretical guarantees within continuous optimization dynamics.

\subsection{Hamiltonian Descent} \label{sec:preli:ham}

Hamiltonian descent provides a theoretical framework for analyzing momentum-based optimization algorithms by introducing an augmented objective function, the Hamiltonian. This framework allows us to study optimization dynamics through the lens of continuous-time differential equations, linking the monotonic descent of the Hamiltonian function to the stability and convergence of the optimization process. We formalize this concept as Definition~\ref{def:ham_des}, based on the formulation presented in Section 2.1 of~\cite{lcll24}.

\section{Gradient Descent with Adaptive Momentum Scaling} \label{sec:grams}

We propose \emph{Gradient Descent with Adaptive Momentum Scaling} (\textbf{Grams}). Grams decouples the direction and magnitude of the update by using the direction from gradients while scaling it with the norm of momentum. This section formalizes the Grams update rule, introduces its key components, and provides theoretical guarantees in both loss descent and Hamiltonian dynamics for its performance.

\subsection{Definitions}

We define the parameter updating rule of Grams formally as below.

\begin{definition}[Grams Parameter Update]\label{def:grams_update}
    The parameter update rule for Grams is:
    \begin{align}
        m_t := & ~ \beta_1 m_{t-1} + (1-\beta_1)g_t \notag \\
        v_t := & ~ \beta_2 v_{t-1} + (1-\beta_2)g_t^2 \notag \\
        \widehat{m}_t := & ~ \frac{m_t}{1-\beta_1^t} \notag \\
        \widehat{v}_t := & ~ \frac{v_t}{1-\beta_2^t} \notag \\
        u_t := & ~ \frac{\widehat{m}_t}{\sqrt{\widehat{v}_t} + \epsilon} \notag \\
        \widehat{u}_t := & ~ \mathrm{sign}(g_t) \circ |u_t| \notag \\
        w_t := & ~ w_{t-1} - \eta_t \widehat{u}_t, \label{eq:grams_update}
    \end{align}
    where $w_t$ is the weight at time step $t$, $g_t = \nabla_w\mathcal{L}_t(w_{t-1})$ is the current gradient, $|\cdot|$ is element-wise absolute value, $\circ$ denotes Hadamard product, and $\mathrm{sign}(\cdot)$ is defined in Definition~\ref{def:sign}.
\end{definition}

\begin{algorithm}[!ht]
\caption{Gradient Descent with Adaptive Momentum Scaling (Grams)}
\label{alg:grams}
\begin{algorithmic}[1]
\Require parameter $w$, step sizes $\{\eta_t\}$, dampening factors $\beta_1, \beta_2 \in [0,1)$, $\epsilon > 0$, weight decay $\gamma \geq 0$
\State Initialize $t = 0$, $m_0 = v_0 = \mathbf{0}$
\While{$w_t$ not converged}
    \State $t \gets t + 1$
    \State $g_t \gets \nabla_w \mathcal{L}_t(w_{t-1})$
    \State $m_t \gets \beta_1 m_{t-1} + (1-\beta_1)g_t$
    \State $v_t \gets \beta_2 v_{t-1} + (1-\beta_2)g_t^2$
    \State $\widehat{m}_t \gets m_t/(1-\beta_1^t)$
    \State $\widehat{v}_t \gets v_t/(1-\beta_2^t)$
    \State $u_t \gets \widehat{m}_t/(\sqrt{\widehat{v}_t} + \epsilon)$
    \State \textcolor{blue}{$\widehat{u}_t \gets \mathrm{sign}(g_t) \circ |u_t|$}
    \State $w_t \gets w_{t-1} - \eta_t \textcolor{blue}{\widehat{u}_t}$
    \State $w_t \gets w_t - \eta_t\gamma w_t$ \Comment{Add weight decay~\cite{lh17}}
\EndWhile
\end{algorithmic}
\end{algorithm}

\subsection{Loss Descent}

In this subsection, we analyze the loss descent properties of the Grams algorithm. Understanding how the loss function decreases over optimization steps provides insights into the efficiency and stability of the method. Below, we formalize the relationship between the step size, gradients, and the resulting decrease in the loss value, leveraging the $L$-smoothness property of the objective function.

\begin{lemma}[Informal version of Lemma~\ref{lem:delta_l_grams}] \label{lem:delta_l_grams:informal}
    Suppose that $\mathcal L:\R^d \to \R$ is $L$-smooth. Let $\Delta \mathcal{L}_{w_{t+1}^{\mathrm{Grams}}, w_t}$ be defined in Definition~\ref{def:delta_l}, $w_{t+1}^{\mathrm{Grams}}$ is updated from $w_t$ using Eq.~\eqref{eq:grams_update}. Then we have the following:
   \begin{itemize}
       \item Part 1. It holds that
        \begin{align}
        \Delta \mathcal{L}_{w_{t+1}^{\mathrm{Grams}}, w_t} \leq - \eta_t \langle |g_t|, |u_t| \rangle + \frac{L \eta_t^2}{2} \| u_t \|_2^2.  \label{eq:delta_l_grams:informal}
    \end{align}
    \item Part 2. It holds that
    \begin{align*}
        \Delta \mathcal{L}_{w_{t+1}^{\mathrm{Grams}}, w_t} \geq - \eta_t \langle |g_t|, |u_t| \rangle. 
    \end{align*}
        \item Part 3. If $\eta_t \leq \frac{2}{L\|u_t\|^2} \langle |g_t|, |u_t| \rangle$, then we have 
        \begin{align*}
            \Delta \mathcal{L}_{w_{t+1}^{\mathrm{Grams}}, w_t} \leq 0.
        \end{align*}
   \end{itemize}
\end{lemma}

Then, we compare the loss descent between Grams and C-Adam.

\begin{theorem}[Loss Descent Comparison, informal version of Theorem~\ref{thm:delta_loss}] \label{thm:delta_loss:informal}
    Suppose that $\mathcal L: \R^d \to \R$ is $L$-smooth.
    For any parameter vector $w$ at optimization step $t$, let $w_{t}^{\mathrm{Grams}}$ and $w_{t}^{\mathrm{C}}$ be the update of Grams in Definition~\ref{def:grams_update} and Cautious optimizers in Definition~\ref{def:cautious_update}, respectively. If the stepsize $\eta_t$ satisfies
    \begin{align*}
        \eta_t \leq \frac{2}{L\|u_t\|^2} \cdot \min\{\langle u_t \circ g_t, {\bf 1}_{u_t \circ g_t \geq 0} \rangle, \langle u_t \circ g_t, {\bf 1}_{u_t \circ g_t < 0}\rangle\},
    \end{align*}
     then we have
    \begin{align*}
        \Delta \mathcal{L}_{w_{t+1}^{\mathrm{Grams}}, w_t} \leq \Delta \mathcal{L}_{w_{t+1}^{\mathrm{C}}, w_t} \leq 0.
    \end{align*}
\end{theorem}

\begin{remark}
    Theorem~\ref{thm:delta_loss:informal} shows that Grams achieves strictly better descent in the loss landscape in the discrete analysis compared to Cautious optimizers. This theoretical guarantee suggests that Grams may converge faster and achieve better minima in practice.
\end{remark}

\subsection{Hamiltonian Dynamics}

In this subsection, we present the Grams Hamiltonian dynamics, which builds upon the augmented Hamiltonian framework to analyze optimization algorithms. By leveraging this framework, we show that the Grams optimizer achieves a monotonic descent of the Hamiltonian and the loss function, with a descent speed that is provably equal to or faster than C-Adam. This highlights Grams’ efficiency and robustness in dynamic optimization landscapes. The formal definition is provided below.

\begin{definition}[Grams Hamiltonian Dynamics] \label{def:grams_ham}
    We could modify Hamiltonian dynamics with Grams' optimizing scheme,
    \begin{align*}
        \frac{\d}{\d t}w_t := & ~ -\mathrm{sign}(\nabla\mathcal{L}(w_t) \circ |\nabla\mathcal{K}(s_t)| - \Phi_t(\nabla \mathcal{L}(w_t)) \\
        \frac{\d}{\d t}s_t := & ~  \nabla \mathcal{L}(w_t) - \Psi_t( \nabla \mathcal{K}(s_t)),
    \end{align*}
    where $|\cdot|$ denotes element-wise absolute value, $\circ$ is the Hadamard product, and $\Phi_t, \Psi_t$ are scaling functions.
\end{definition}

The convergence properties of Grams within the Hamiltonian dynamics framework are formalized in the theorem below.

\begin{theorem}[Convergence of Grams Hamiltonian Dynamics, informal version of Theorem~\ref{thm:ham_grams}] \label{thm:ham_grams:informal}
Following the dynamics in Definition~\ref{def:grams_ham}, we have
\begin{align*}
   \Delta^{\text{Grams}}_{H}(w_t, s_t) := & ~ \frac{\d}{\d t}H(w_t, s_t) \leq 0,\\
   \Delta^{\text{Grams}}_{\mathcal{L}}(w_t) := & ~  \frac{\d}{\d t}\mathcal{L}(w_t) \leq - \Delta_{\mathcal{L}}(w_t, s_t),
\end{align*}
where $\Delta_{H_t}(w_t, s_t)$ and $\Delta_{\mathcal{L}_t}(w_t, s_t)$ represent the decreasing rates of $H$ and $\mathcal{L}$ in accordance with the system in Definition~\ref{def:ham_des}.
\end{theorem}

Based on this theorem, we compare the convergence rates of Grams and Cautious optimizers in the context of Hamiltonian dynamics. The following theorem demonstrates that Grams achieves a faster or equal rate of loss descent compared to Cautious optimizers, highlighting its efficiency in optimization.

\begin{theorem}[Convergence Comparison of Hamiltonian Dynamics between Grams and Cautious Optimizers, informal version of Theorem~\ref{thm:ham_cmp}] \label{thm:ham_cmp:informal}
    From Theorem~\ref{thm:ham_grams:informal} and \ref{thm:ham_c}, recall $\Delta^{\text{Grams}}_{\mathcal{L}}(w_t)$ and $\Delta^{\text{C}}_{\mathcal{L}}(w_t)$:
    \begin{align*}
        \Delta^{\text{Grams}}_{\mathcal{L}}(w_t) \leq \Delta^{\text{C}}_{\mathcal{L}}(w_t).
    \end{align*}
\end{theorem}

\begin{remark}
    Theorem~\ref{thm:ham_cmp:informal} illustrates the faster loss decreasing speed in the Grams Hamiltonian dynamic system, compared to Cautious's counterpart. 
\end{remark}

Building on this comparison, we now state a corollary from~\cite{lcll24} that establishes the convergence of bounded solutions in Hamiltonian systems to stationary points of the augmented loss.

\begin{corollary} [Corollary 2.4 in~\cite{lcll24}]
Assume that $\langle x, \Psi(x) \rangle$ is positive definite for all $x \in \R^d$, $\Psi(0)=0$, and that $H(w,s) = \mathcal{L}(w) + \mathcal{K}(s)$ is differentiable. Then, the bounded solutions of the original system Eq.~\eqref{equ:hd} converge to a stationary point of $H(w,s)$. Similarly, the bounded solutions of Definition~\ref{def:grams_ham} also converge to a stationary point of $H(w,s)$.
\end{corollary}

\subsection{Global Convergence of Grams} \label{sec:convergence}

In this subsection, we establish the global convergence properties of the Grams optimizer. By analyzing the update rules and assumptions on the optimization landscape, we demonstrate that Grams converges to a stationary point of the objective function. This analysis underscores the optimizer’s robustness and effectiveness in a wide range of optimization scenarios.

\subsubsection{Assumptions}

To ensure theoretical rigor, we base our analysis on the following standard assumptions commonly used in optimization theory. These assumptions define the properties of the loss function and the optimization setting, enabling precise derivations of convergence guarantees.

\begin{assumption}[Lower bound of loss]\label{as:lower_bound}
    The Loss function $\mathcal L:\R^d \to \R$ is differentiable and closed within its open domain $\mathrm{dom}(\mathcal L) \subseteq \mathbb{R}^d$ and is bounded from below, i.e., $\mathcal L^* := \inf_w \mathcal L(w) > -\infty$.
\end{assumption}

\begin{assumption}[Bounded gradient]\label{as:bounded_gradient}
    The Loss function $\mathcal L:\R^d \to \R$ satisfies $\nabla \mathcal L(w) \leq G$ for all $w \in \mathrm{dom}(\mathcal L)$.
\end{assumption}

\begin{assumption}[$L$-smooth]\label{as:smooth}
    The Loss function $\mathcal L:\R^d \to \R$ is $L$-smooth for some $L > 0$.
\end{assumption}

\begin{assumption}[$\mu$-PL-condition]\label{as:pl}
    The Loss function $\mathcal L:\R^d \to \R$ satisfies $\mu$-PL-condition for some $\mu > 0$.
\end{assumption}

\subsubsection{Convergence}

In this subsection, we provide a detailed analysis of the convergence properties of the Grams optimizer. We begin by revisiting the convergence guarantee of the widely-used Adam optimizer as established in~\cite{lrj23}. Using this as a foundation, we extend the analysis to Grams, highlighting its enhanced convergence behavior under the same assumptions.

\begin{lemma}[Convergence of Adam, Section~5.3 in~\cite{lrj23}]\label{lem:convergence_adam}
    Suppose that Assumptions~\ref{as:lower_bound},~\ref{as:bounded_gradient}, and~\ref{as:smooth} hold. Given initial weight $w_1$ with initial optimality gap $\Delta_1 := \mathcal L(w_1) - \mathcal L^* < \infty$, choose an large enough $G$ such that $G \geq \max\{\epsilon, 3\sqrt{L\Delta_1}\}$, a small enough fixed step size $\eta > 0$, and $\beta = \Theta(\eta G^{1/2})$. Consider that the weight $w_t$ is updated by Adam 
    for each $t \in [T]$. Then we have
    \begin{align*}
       \frac{1}{T}\sum_{t=1}^T  \|\nabla \mathcal{L}(w_t)\|_2^2 \leq \frac{8G\Delta_1}{\eta T}.
    \end{align*}
\end{lemma}

The result in Lemma~\ref{lem:convergence_adam} establishes a baseline for the convergence of Adam under standard assumptions. Building on this, we extend the analysis to Grams by leveraging its unique update mechanism, which decouples the direction and magnitude of updates. The following theorem demonstrates that Grams achieves global convergence, meaning that it is guaranteed to reach the optimal objective value from any initial point with finite initial optimality gap. 

\begin{theorem}[Convergence of Grams, informal version of Theorem~\ref{thm:convergence_grams}] \label{thm:convergence_grams:informal}
    Suppose that Assumptions~\ref{as:lower_bound},~\ref{as:bounded_gradient},~\ref{as:smooth} and~\ref{as:pl} hold. Given initial point $w_1$ with initial optimality gap $\Delta_1 := \mathcal L(w_1) - \mathcal L^* < \infty$, choose large an enough $G$ such that $G \geq \max\{\epsilon, 3\sqrt{L\Delta_1}\}$, a small enough fixed step size $\eta > 0$, and $\beta = \Theta(\eta G^{1/2})$. Consider that the weight $w_t$ is updated by Grams (Algorithm~\ref{alg:grams}) for each $t \in [T]$. Then we have
    \begin{align*}
       \mathcal{L}(w_T) - \mathcal{L}^* \leq \frac{4G}{\mu \eta T} (\mathcal{L}(w_1) - \mathcal{L}^*).
    \end{align*}
\end{theorem}


\section{Empirical Experiments} \label{sec:exp}
We conducted comprehensive experiments across both pre-training and fine-tuning stages to evaluate the performance of our proposed Grams optimizer. Comparisons were made against several baseline optimizers, including Adam~\cite{kb14}, Lion~\cite{clh+24}, C-Adam, C-Lion~\cite{lcll24}, and, in some experiments, RMSprop~\cite{hss12, r16}.

For Lion and C-Lion, we followed the recommendation from~\cite{clh+24}, setting their learning rates to $\frac{1}{10} \times \text{Adam learning rate}$. Additional details and hyperparameters of our experiments can be found in Section~\ref{app:exp}.

\subsection{Pre-Training}
We trained from scratch on the Llama 60M model~\cite{dja+24} using the first $2,048,000$ rows of data from English subset of the C4 dataset~\cite{rsr+20} to assess Grams' optimization capability for Transformer-based~\cite{vsp+17} natural language generation (NLG) tasks. Due to the limited computing resources, we trained $1,000$ steps using constant with warm-up scheduler, in order to simulate the beginning part of regular pre-training.We used the first $10,000$ rows of validation data from the English section of the C4 dataset for evaluation. See 
Table~\ref{tab:exp_nlg} for evaluation results.


\begin{table}[!ht] 
\caption{Evaluation results of Llama 60M pre-training experiments.}
\label{tab:exp_nlg}
\begin{center}
\begin{small}
\begin{sc}
\begin{tabular}{l |c}
\toprule
Optimizer & Perplexity$\downarrow$ \\ 
\midrule
Adam & 49.83 \\  
C-Adam & \underline{43.21} \\  
Lion &   50.25\\  
C-Lion &  53.21 \\ 
Grams (ours) & \textbf{38.60} \\ 
\bottomrule
\end{tabular}
\end{sc}
\end{small}
\end{center}
\end{table}

The evaluation results of the Llama 60M pre-training experiments, as presented in Table~\ref{tab:exp_nlg}, reveal that the Grams optimizer achieves the lowest perplexity (38.60) compared to other state-of-the-art optimizers, including Adam (49.83), C-Adam (43.21), Lion (50.25), and C-Lion (53.21). This substantial reduction in perplexity highlights the effectiveness of Grams in optimizing language model performance. While C-Adam and Lion exhibit improvements over their respective base optimizers, Adam and C-Lion, Grams outperforms all variants, underscoring its ability to enhance convergence and generalization. 
The result demonstrates Grams’ superiority in both training efficiency and model quality for large-scale machine learning tasks.

For computer vision tasks, we trained and evaluated the WideResNet-50-2 model~\cite{zk16} on the CIFAR-10 dataset~\cite{k09}. 
Table~\ref{tab:exp_cv} provides the final accuracy results.


\begin{table}[!ht] 
\caption{Evaluation results of WideResNet-50-2 training experiments from scratch.}
\label{tab:exp_cv}
\begin{center}
\begin{small}
\begin{sc}
\begin{tabular}{l|c}
\toprule
Optimizer & Final Acc$\uparrow$ \\
\midrule
RMSprop & 84.47\% \\ 
Adam & 87.56\% \\ 
C-Adam & 88.78\% \\
Lion & 89.21\% \\
C-Lion & \underline{89.42\%} \\
Grams (ours) & \textbf{90.55\%} \\
\bottomrule
\end{tabular}
\end{sc}
\end{small}
\end{center}
\end{table}

Table~\ref{tab:exp_cv} highlight the performance of various optimizers—RMSprop, Adam, C-Adam, Lion, C-Lion, and Grams—on the WideResNet-50-2 model trained on the CIFAR-10 dataset.
The final accuracy results are presented in Table~\ref{tab:exp_cv}, where Grams achieves the highest accuracy of 90.55\%, surpassing Lion (89.21\%), C-Lion (89.42\%), Adam (87.56\%) and C-Adam (88.78\%). These results emphasize the effectiveness of Grams in accelerating optimization while achieving superior generalization, making it a robust choice for computer vision tasks.

\subsection{Fine-Tuning}

We performed full fine-tuning (FT) experiments on the Llama 3.2 1B model~\cite{dja+24} using the MetaMathQA dataset~\cite{yjs+23}. To evaluate the model, we measured accuracy on the GSM-8K dataset~\cite{ckb+21}. Results are reported in Table~\ref{tab:exp_ft}.

\begin{table}[!ht]
\caption{Evaluation results of Llama 3.2 1B FT experiments.}
\label{tab:exp_ft}
\begin{center}
\begin{small}
\begin{sc}
\begin{tabular}{ l | c }
\toprule
Optimizer & GSM-8K$\uparrow$\\
\midrule
Adam & 48.90\% \\
C-Adam & \underline{49.81\%} \\
Grams (ours) & \textbf{51.02\%} \\
\bottomrule
\end{tabular}
\end{sc}
\end{small}
\end{center}
\end{table}

The results in Table~\ref{tab:exp_ft} showcase the performance of different optimizers during the full FT experiments on the Llama 3.2 1B model using the MetaMathQA dataset. The model’s accuracy was evaluated on the GSM-8K dataset. Among the optimizers, Grams achieved the highest accuracy of 51.02\%, outperforming both Adam (48.90\%) and C-Adam (49.81\%). These results highlight the effectiveness of Grams in fine-tuning tasks, particularly in improving the model’s ability to handle complex datasets like GSM-8K. The superior performance of Grams demonstrates its capacity to achieve better generalization and optimization efficiency in fine-tuning scenarios.

We conducted parameter-efficient fine-tuning (PEFT) experiments on the Llama 3.2 3B model using the SORSA method~\cite{c24} and the first 100,000 rows of data from the MetaMathQA dataset~\cite{yjs+23}. The evaluation was performed on the MATH dataset~\cite{hbk+21}, with the results summarized in Table~\ref{tab:exp_peft}.

\begin{table}[!ht]
\caption{Evaluation results of Llama 3.2 3B PEFT experiments.}
\label{tab:exp_peft}
\begin{center}
\begin{small}
\begin{sc}
\begin{tabular}{ l | c }
\toprule
Optimizer & MATH$\uparrow$ \\
\midrule
Adam &\textbf{17.80\%} \\
C-Adam & 16.62\% \\
Grams (ours) & \textbf{17.80\%} \\
\bottomrule
\end{tabular}
\end{sc}
\end{small}
\end{center}
\end{table}

Grams achieved an accuracy of 17.80\%, matching the performance of Adam and outperforming C-Adam (16.62\%). These results indicate that Grams performs comparably to Adam in PEFT scenarios, maintaining its robust optimization capabilities while offering the additional benefits of parameter efficiency. This consistency further emphasizes Grams’ versatility in various fine-tuning settings.

\section{Conclusion and Future Work}\label{sec:conclusion}

In this paper, we introduced Gradient Descent with Adaptive Momentum Scaling (Grams), a novel optimization algorithm designed to decouple the direction and magnitude of parameter updates. By leveraging this decoupling, Grams demonstrated superior performance in both theoretical convergence guarantees and empirical evaluations, outperforming state-of-the-art optimizers such as Adam~\cite{lh17}, Lion~\cite{clh+24}, and their Cautious variants~\cite{lcll24}. The results across various tasks highlight Grams' potential as a transformative approach for efficiently training large language models.

Grams achieved faster convergence and better generalization in our experiments. These properties make it particularly well-suited for modern applications such as large-scale pre-training and fine-tuning of deep learning models, where efficiency and stability are critical.

Building on the promising results of Grams, future work will focus on integrating ideas from recent advancements such as ADOPT~\cite{thm+24}, Schedule Free~\cite{dym+24}, and SOAP-Muon~\cite{vzm+25} methods. Incorporating the ADOPT and schedule-free learning rate adjustment strategies might improve Grams' robustness and performance across diverse tasks and architectures. By blending these complementary innovations with the core principles of Grams, we aim to develop an even more versatile and efficient optimization framework for large language model training.

\clearpage



\ifdefined\isarxiv
\bibliographystyle{alpha}
\bibliography{ref}

\else
\bibliography{ref}
\bibliographystyle{iclr2025_conference}

\fi

\newpage
\onecolumn

\begin{center}
	\textbf{\LARGE Appendix }
\end{center}
\appendix


\paragraph{Roadmap.} In the appendix, we first provide some useful facts in Section~\ref{app:fac}, which are utilized in the results. Section~\ref{app:loss_des} presents a formal analysis of loss descent for Grams optimizers. In Section~\ref{app:ham}, we illustrate the the property of Grams optimizer in the landscape of Hamiltonian dynamics. In Section~\ref{app:g_conv}, we show the formal proof for the global convergence guarantee of Grams optimizer. Finally, we list the details of our experiments in Section~\ref{app:exp}.

\section{Useful Facts} \label{app:fac}

\begin{fact}\label{fac:hadamard_inner_product}
    Given vectors $a, b, c \in \R^d$, we have
    \begin{align*}
        \langle a, b \circ c\rangle = \langle a \circ b, c \rangle.
    \end{align*}
\end{fact}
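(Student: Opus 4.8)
\textbf{Proof proposal for Fact~\ref{fac:hadamard_inner_product}.}
The plan is to reduce the identity to a coordinate-wise computation using the definition of the Hadamard product and of the Euclidean inner product. First I would write $\langle a, b\circ c\rangle = \sum_{i=1}^d a_i\,(b\circ c)_i$, and then expand $(b\circ c)_i = b_i c_i$ by the definition of the entrywise product, so that the sum becomes $\sum_{i=1}^d a_i b_i c_i$.

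Next I would use commutativity and associativity of multiplication in $\R$ to regroup each term as $(a_i b_i) c_i = (a\circ b)_i\, c_i$, which gives $\sum_{i=1}^d (a\circ b)_i c_i = \langle a\circ b, c\rangle$. Chaining these equalities yields the claim. There is essentially no obstacle here: the only ``content'' is that scalar multiplication is commutative and associative, and that both the Hadamard product and the inner product are defined entrywise; the entire argument is a single line of index manipulation. The one point worth stating explicitly for cleanliness is that the finite sum may be freely reassociated, which is immediate for a sum over the finite index set $[d]$.
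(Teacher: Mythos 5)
Your proof is correct, and it is the natural coordinate-wise argument: both sides equal $\sum_{i=1}^d a_i b_i c_i$ by the entrywise definitions of the Hadamard product and the Euclidean inner product. The paper states this fact without proof, so there is no proof to compare against; your one-line index manipulation is exactly what the authors presumably have in mind and is the standard way to verify it.
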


\begin{fact} \label{fac:a_b}
    Let two vectors $a, b \in \mathbb{R}^n$, then:
    \begin{align*}
        \langle a, -\mathrm{sign}(a) \circ |b| \rangle = & ~ -\langle |a|, |b| \rangle
    \end{align*}
\end{fact}

\begin{proof}
    For the left side of the equation:
    \begin{align*}
        \langle a, -\mathrm{sign}(a) \circ |b| \rangle = & ~ \sum_{i=1}^n -a_i \mathrm{sign}(a_i) |b|_i\\
        = & ~ - \sum_{i=1}^n |a|_i |b|_i \\
        = & ~ - \langle |a|, |b| \rangle
    \end{align*}
    where the first step comes from the definition of inner product, the second step uses Fact~\ref{fac:a_dot_sign_a}, and the final step uses the definition of inner product again.
\end{proof}

\begin{fact} \label{fac:ab_a_b}
    Let two vectors $a, b \in \mathbb{R}^n$, then:
    \begin{align*}
        \langle a, b \rangle - \langle |a|, |b| \rangle \leq 0.
    \end{align*}
\end{fact}

\begin{proof}
    \begin{align*}
        \langle a, b \rangle - \langle |a|, |b| \rangle = & ~ \sum_{i=1}^n a_ib_i - |a|_i |b|_i \\
        = & ~ \sum_{i=1}^n \begin{cases}
        0 & \text{if } a_i \text{ and } b_i \text{ have the same sign} \\
        -2|a_i||b_i| & \text{if } a_i \text{ and } b_i \text{ have opposite signs}
    \end{cases} \\
    \leq & ~ 0,
    \end{align*}
    where the first step uses the definition of inner product, the second step discusses the only two cases we have for signs, and the final inequality comes from basic algebra.
\end{proof}

\begin{fact} \label{fac:ab_absab_mask}
    Let $x = a \circ b$ be an element-wise product of two vectors $a, b \in \mathbb{R}^n$, then:
    \begin{align*}
        \langle a,b \rangle - \langle |a|, |b| \rangle - \langle a \circ b, \mathbf{1} - \mathbf{1}_{a \circ b > 0}\rangle \leq 0
    \end{align*}
\end{fact}

\begin{proof}
    \begin{align*}
        & \langle a, b \rangle - \langle |a|, |b| \rangle - \langle a \circ b, \mathbf{1} - \mathbf{1}_{a \circ b > 0} \rangle \\
        = & ~ \sum_{i=1}^n a_i b_i - \sum_{i=1}^n |a_i||b_i| - ( \sum_{i=1}^n a_i b_i - \sum_{i: a_i b_i > 0}^n a_i b_i ) \\
        = & ~ \sum^n_{i: a_i b_i > 0} a_i b_i - \sum_{i=1}^n |a_i||b_i|,
    \end{align*}
    where the first step expands the terms, and the second step simplifies by splitting the sum based on the sign of $a_i b_i$. 

    If all $a_i b_i \geq 0$, then $\sum^n_{i: a_i b_i > 0} a_i b_i = \sum_{i=1}^n |a_i||b_i|$, so the expression is $0$. Otherwise, $\sum_{i=1}^n |a_i||b_i| > \sum^n_{i: a_i b_i > 0} a_i b_i$, so the expression is negative.

    Thus,
    \begin{align*}
        \langle a,b \rangle - \langle |a|, |b| \rangle - \langle a \circ b, \mathbf{1}_d - \mathbf{1}_{a \circ b > 0}\rangle = \sum^n_{i: a_i b_i > 0} a_i b_i - \sum_{i=1}^n |a_i||b_i| \leq 0.
    \end{align*}
    The proof is complete.
\end{proof}

\begin{fact} \label{fac:a_dot_sign_a}
    Given a scalar $a \in \mathbb{R}$, we have:
    \begin{align*}
        a \cdot \mathrm{sign}(a) = |a|.
    \end{align*}
\end{fact}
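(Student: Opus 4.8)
The plan is to prove Fact~\ref{fac:a_dot_sign_a} by a direct case analysis on the sign of the scalar $a$, following immediately from Definition~\ref{def:sign} and the definition of absolute value. There are exactly three mutually exclusive cases to consider: $a > 0$, $a = 0$, and $a < 0$.

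First I would handle the case $a > 0$: here $\mathrm{sign}(a) = 1$ by Definition~\ref{def:sign}, so $a \cdot \mathrm{sign}(a) = a$, and since $a > 0$ we have $|a| = a$, giving the claim. Next, the case $a = 0$: here $\mathrm{sign}(a) = 0$, so the left-hand side is $0$, and $|0| = 0$, matching. Finally, the case $a < 0$: here $\mathrm{sign}(a) = -1$, so $a \cdot \mathrm{sign}(a) = -a$, and since $a < 0$ we have $|a| = -a$, which again matches. Combining the three cases establishes $a \cdot \mathrm{sign}(a) = |a|$ for every $a \in \mathbb{R}$.

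There is no real obstacle here; the statement is elementary and the only thing to be careful about is not overlooking the $a = 0$ boundary case, since the paper uses a three-valued sign convention in which $\mathrm{sign}(0) = 0$ rather than $\pm 1$. The argument is purely definitional and requires no estimates or auxiliary lemmas.
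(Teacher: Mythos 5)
Your proposal is correct and matches the paper's own proof, which likewise verifies $a \cdot \mathrm{sign}(a) = |a|$ by the same three-case analysis ($a>0$, $a=0$, $a<0$) directly from Definition~\ref{def:sign}. Your explicit attention to the $a=0$ case with the three-valued sign convention is exactly the point the paper's proof also covers.
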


\begin{proof}
    Let $a \in \mathbb{R}$. By Definition~\ref{def:sign}:
    \begin{align*}
        \mathrm{sign}(a) =
        \begin{cases}
            1, & \text{if } a > 0, \\
            0, & \text{if } a = 0, \\
            -1, & \text{if } a < 0.
        \end{cases}
    \end{align*}

    Consider the following cases:
    \begin{itemize}
        \item If $a > 0$, then $\mathrm{sign}(a) = 1$, so:
        \begin{align*}
            a \cdot \mathrm{sign}(a) = a \cdot 1 = a = |a|.
        \end{align*}
        \item If $a = 0$, then $\mathrm{sign}(a) = 0$, so:
        \begin{align*}
            a \cdot \mathrm{sign}(a) = 0 \cdot 0 = 0 = |a|.
        \end{align*}
        \item If $a < 0$, then $\mathrm{sign}(a) = -1$, so:
        \begin{align*}
            a \cdot \mathrm{sign}(a) = a \cdot (-1) = -a = |a|.
        \end{align*}
    \end{itemize}

    Thus, in all cases, $a \cdot \mathrm{sign}(a) = |a|$.
\end{proof}

\begin{fact} \label{fac:a_circ_sign_a}
    Given a vector $a = (a_1, a_2, \dots, a_n) \in \mathbb{R}^n$, we have:
    \begin{align*}
        a \circ \mathrm{sign}(a) = |a|,
    \end{align*}
    where the operations are applied component-wise.
\end{fact}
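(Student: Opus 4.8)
The plan is to reduce the vector identity to the scalar identity already established in Fact~\ref{fac:a_dot_sign_a} by working component-wise. First I would recall that the Hadamard product $a \circ \mathrm{sign}(a)$ is, by definition, the vector whose $i$-th entry is $a_i \cdot (\mathrm{sign}(a))_i$, and that by Definition~\ref{def:sign} we have $(\mathrm{sign}(a))_i = \mathrm{sign}(a_i)$. Hence the $i$-th entry of $a \circ \mathrm{sign}(a)$ is exactly $a_i \cdot \mathrm{sign}(a_i)$.

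Next I would invoke Fact~\ref{fac:a_dot_sign_a}, which gives $a_i \cdot \mathrm{sign}(a_i) = |a_i|$ for every $i \in [n]$. Since $|a_i|$ is precisely the $i$-th entry of the vector $|a|$ (absolute value applied component-wise), the two vectors $a \circ \mathrm{sign}(a)$ and $|a|$ agree in every coordinate, and therefore they are equal. One could optionally split into the three cases $a_i > 0$, $a_i = 0$, $a_i < 0$ inline, but that is redundant given Fact~\ref{fac:a_dot_sign_a}.

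There is essentially no obstacle here: the only content is unwinding the three definitions (Hadamard product, the component-wise sign function, and the component-wise absolute value) so that the scalar result applies verbatim in each coordinate. The statement is purely a bookkeeping lemma that will be used later to simplify expressions such as $\langle a, -\mathrm{sign}(a) \circ |b|\rangle$ in the analysis of Grams.
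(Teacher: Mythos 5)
Your proposal is correct and follows exactly the same route as the paper's proof: expand the Hadamard product component-wise, note that $\mathrm{sign}$ acts component-wise by Definition~\ref{def:sign}, apply the scalar identity of Fact~\ref{fac:a_dot_sign_a} in each coordinate, and conclude by component-wise equality. Nothing to add.
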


\begin{proof}
    Let $a = (a_1, a_2, \dots, a_n) \in \mathbb{R}^n$. By Definition~\ref{def:sign}, the $\mathrm{sign}$ function is applied component-wise:
    \begin{align*}
        \mathrm{sign}(a) = (\mathrm{sign}(a_1), \mathrm{sign}(a_2), \dots, \mathrm{sign}(a_n)).
    \end{align*}

    Expanding the Hadamard product $a \circ \mathrm{sign}(a)$ component-wise:
    \begin{align*}
        a \circ \mathrm{sign}(a) = (a_1 \cdot \mathrm{sign}(a_1), a_2 \cdot \mathrm{sign}(a_2), \dots, a_n \cdot \mathrm{sign}(a_n)).
    \end{align*}

    By Fact~\ref{fac:a_dot_sign_a} (the scalar version), for each $i$:
    \begin{align*}
        a_i \cdot \mathrm{sign}(a_i) = |a_i|.
    \end{align*}

    Thus:
    \begin{align*}
        a \circ \mathrm{sign}(a) = (|a_1|, |a_2|, \dots, |a_n|) = |a|,
    \end{align*}
    where the absolute value $|a|$ is applied component-wise.
\end{proof}

\section{Loss Descent} \label{app:loss_des}

\begin{lemma}[Formal version of Lemma~\ref{lem:delta_l_c:informal}] \label{lem:delta_l_c}
    Suppose that $\mathcal L:\R^d \to \R$ is $L$-smooth.
    Let $\Delta \mathcal{L}_{w_{t+1}^{\mathrm{C}}, w_t}$ be defined in Definition~\ref{def:delta_l}, $w_{t+1}^{\mathrm{C}}$ is updated from $w_t$ using Definition~\ref{def:cautious_update}. Then we have the followings:

    \begin{itemize}
        \item Part 1. It holds that
    \begin{align}
        \Delta \mathcal{L}_{w_{t+1}^{\mathrm{C}}, w_t} 
        \leq & ~ -\eta_t \langle u_t \circ g_t, {\bf 1}_{u_t \circ g_t \geq 0}\rangle + \frac{L \eta_t^2}{2}\| u_t\|_2^2, \label{eq:delta_l}
    \end{align}
    \item Part 2. It holds that
    \begin{align*}
            \Delta \mathcal{L}_{w_{t+1}^{\mathrm{C}}, w_t} 
        \geq & ~ -\eta_t \langle u_t \circ g_t, {\bf 1}_{u_t \circ g_t \geq 0}\rangle.
    \end{align*}
    \item Part 3. If $\eta_t \leq \frac{2}{L\|u_t\|^2_2} \langle u_t \circ g_t, {\bf 1}_{u_t \circ g_t \geq 0}\rangle$, then $\Delta \mathcal{L}_{w_{t+1}^{\mathrm{C}}, w_t} \leq 0$.
    \end{itemize}
\end{lemma}
\begin{proof}
    \textbf{Proof of Part 1.} 
    We can show that
    \begin{align}
    \Delta \mathcal{L}_{w_{t+1}^{\mathrm{C}}, w_t} 
    = & ~ \mathcal{L}(w_{t+1}) - \mathcal{L}(w_t)  \notag \\
    \leq & ~  \mathcal{L}(w_t) + \langle g_t, w_{t+1} - w_t\rangle + \frac{L}{2}\|w_{t+1} - w_t\|_2^2 - \mathcal{L}(w_t) \notag\\
    = & ~ \langle g_t, w_{t+1} - w_t\rangle + \frac{L}{2}\|w_{t+1} - w_t\|_2^2 \notag\\
    = & ~ \langle g_t, -\eta_t u_t \circ {\bf 1}_{u_t \circ g_t \geq 0}\rangle + \frac{L}{2}\|\eta_t u_t \circ {\bf 1}_{u_t \circ g_t \geq 0}\|_2^2 \notag\\
    = & ~ -\eta_t \langle u_t \circ g_t, {\bf 1}_{u_t \circ g_t \geq 0}\rangle + \frac{L}{2}\|\eta_t u_t \circ {\bf 1}_{u_t \circ g_t \geq 0}\|_2^2 \notag\\
    \leq & ~ -\eta_t \langle u_t \circ g_t, {\bf 1}_{u_t \circ g_t \geq 0}\rangle + \frac{L \eta_t^2}{2}\| u_t\|_2^2 \label{eq:delta_lc_upper_bound}
    \end{align}
    where the first step follows from Definition~\ref{def:delta_l}, the second step follows from that $\mathcal L$ is $L$-smooth and Fact~\ref{fac:smooth_upper_bound}, the third step follows from basic algebra,
    the fourth step follows from Definition~\ref{def:cautious_update}, the fifth step follows from Fact~\ref{fac:hadamard_inner_product}, and the last step follows from basic algebra.

    \textbf{Proof of Part 2.} Next, we can show that
    \begin{align}
    \Delta \mathcal{L}_{w_{t+1}^{\mathrm{C}}, w_t} 
    = & ~ \mathcal{L}(w_{t+1}) - \mathcal{L}(w_t)  \notag \\
    \geq & ~  \mathcal{L}(w_t) +\langle g_t, w_{t+1} + w_t\rangle - \frac{L}{2}\|w_{t+1} - w_t\|_2^2 - \mathcal{L}(w_t) \notag\\
    \geq & ~ \langle g_t, w_{t+1} - w_t\rangle \notag\\
    = & ~ \langle g_t, -\eta_t u_t \circ {\bf 1}_{u_t \circ g_t \geq 0}\rangle \notag\\
    = & ~ -\eta_t \langle u_t \circ g_t, {\bf 1}_{u_t \circ g_t \geq 0}\rangle
    \end{align}
    where the first step follows from Definition~\ref{def:delta_l}, the second step follows from that $\mathcal L$ is $L$-smooth and Fact~\ref{fac:smooth_upper_bound}, the third step follows from basic algebra,
    the fourth step follows from Definition~\ref{def:cautious_update}, the last step follows from Fact~\ref{fac:hadamard_inner_product}.

    \textbf{Proof of Part 3.} By rearranging the Eq.~\eqref{eq:delta_lc_upper_bound}, it is clear that
    if $\eta_t \leq \frac{2}{L\|u_t\|^2_2} \langle u_t \circ g_t, {\bf 1}_{u_t \circ g_t \geq 0}\rangle$, then we have $\Delta \mathcal{L}_{w_{t+1}^{\mathrm{C}}, w_t} \leq 0$.
\end{proof}

\begin{lemma}[Formal version of Lemma~\ref{lem:delta_l_grams:informal}] \label{lem:delta_l_grams}
    Suppose that $\mathcal L:\R^d \to \R$ is $L$-smooth. Let $\Delta \mathcal{L}_{w_{t+1}^{\mathrm{Grams}}, w_t}$ be defined in Definition~\ref{def:delta_l}, $w_{t+1}^{\mathrm{Grams}}$ is updated from $w_t$ using Eq.~\eqref{eq:grams_update}. Then we have the following:
   \begin{itemize}
       \item Part 1. It holds that
        \begin{align}
        \Delta \mathcal{L}_{w_{t+1}^{\mathrm{Grams}}, w_t} \leq - \eta_t \langle |g_t|, |u_t| \rangle + \frac{L \eta_t^2}{2} \| u_t \|_2^2.  \label{eq:delta_l_grams}
    \end{align}
    \item Part 2. It holds that
    \begin{align*}
        \Delta \mathcal{L}_{w_{t+1}^{\mathrm{Grams}}, w_t} \geq - \eta_t \langle |g_t|, |u_t| \rangle. 
    \end{align*}
        \item Part 3. If $\eta_t \leq \frac{2}{L\|u_t\|^2} \langle |g_t|, |u_t| \rangle$, then we have $\Delta \mathcal{L}_{w_{t+1}^{\mathrm{Grams}}, w_t} \leq 0$.
   \end{itemize}
\end{lemma}

\begin{proof}
    \textbf{Proof of Part 1.} 
    We can show that
    \begin{align}
    \Delta \mathcal{L}_{w_{t+1}^{\mathrm{Grams}}, w_t} 
    = & ~ \mathcal{L}(w_{t+1}) - \mathcal{L}(w_t)  \notag \\
    \leq & ~  \mathcal{L}(w_t) + \langle g_t, w_{t+1} - w_t\rangle + \frac{L}{2}\|w_{t+1} - w_t\|_2^2 - \mathcal{L}(w_t) \notag\\
    = & ~ \langle g_t, w_{t+1} - w_t\rangle + \frac{L}{2}\|w_{t+1} - w_t\|_2^2 \notag\\
    = & ~  \langle g_t, -\eta_t \cdot \mathrm{sign}(g_t) \circ |u_t| \rangle + \frac{L}{2}\|\eta_t \cdot \mathrm{sign}(g_t) \circ |u_t| \|_2^2 \notag\\
    = & ~ -\eta_t \langle g_t \circ \mathrm{sign}(g_t), |u_t|\rangle + \frac{L}{2}\|\eta_t u_t\|_2^2 \notag\\
    \leq & ~ -\eta_t \langle |g_t|, |u_t| \rangle + \frac{L \eta_t^2}{2}\| u_t\|_2^2 \label{eq:delta_lg_upper_bound}
    \end{align}
    where the first step follows from Definition~\ref{def:delta_l}, the second step follows from that $\mathcal L$ is $L$-smooth and Fact~\ref{fac:smooth_upper_bound}, the third step follows from basic algebra,
    the fourth step follows from Definition~\ref{def:grams_update}, the fifth step follows from the Fact~\ref{fac:hadamard_inner_product}, and the last step follows from $g_t \circ \mathrm{sign}(g_t) = |g_t|$.

    \textbf{Proof of Part 2.} Next, we can show that
    \begin{align}
    \Delta \mathcal{L}_{w_{t+1}^{\mathrm{Grams}}, w_t} 
    = & ~ \mathcal{L}(w_{t+1}) - \mathcal{L}(w_t)  \notag \\
    \geq & ~  \mathcal{L}(w_t) +\langle g_t, w_{t+1} + w_t\rangle - \frac{L}{2}\|w_{t+1} - w_t\|_2^2 - \mathcal{L}(w_t) \notag\\
    \geq & ~ \langle g_t, w_{t+1} - w_t\rangle \notag\\
    = & ~ \langle g_t, -\eta_t \cdot \mathrm{sign}(g_t) \circ |u_t| \rangle \notag\\
    = & ~ -\eta_t \langle |g_t|, |u_t|\rangle
    \end{align}
    where the first step follows from Definition~\ref{def:delta_l}, the second step follows from that $\mathcal L$ is $L$-smooth and Fact~\ref{fac:smooth_upper_bound}, the third step follows from basic algebra,
    the fourth step follows from Definition~\ref{def:grams_update}, the last step follows from the Fact~\ref{fac:hadamard_inner_product} and Fact~\ref{fac:a_circ_sign_a}.

    \textbf{Proof of Part 3.} By rearranging the Eq.~\eqref{eq:delta_lg_upper_bound}, it is clear that
    if $\eta_t \leq \frac{2}{L\|u_t\|^2_2} \langle |g_T|, |u_t| \rangle$, then we have $\Delta \mathcal{L}_{w_{t+1}^{\mathrm{Grams}}, w_t} \leq 0$.
\end{proof}

\begin{theorem}[Loss Descent Comparison, formal version of Theorem~\ref{thm:delta_loss:informal}] \label{thm:delta_loss}
    Suppose that $\mathcal L: \R^d \to \R$ is $L$-smooth.
    For any parameter vector $w$ at optimization step $t$, let $w_{t}^{\mathrm{Grams}}$ and $w_{t}^{\mathrm{C}}$ be the update of Grams in Definition~\ref{def:grams_update} and Cautious optimizers in Definition~\ref{def:cautious_update}, respectively. If the stepsize $\eta_t$ satisfies
    \begin{align*}
        \eta_t \leq \frac{2}{L\|u_t\|^2} \cdot \min\{\langle u_t \circ g_t, {\bf 1}_{u_t \circ g_t \geq 0} \rangle, \langle u_t \circ g_t, {\bf 1}_{u_t \circ g_t < 0}\rangle\},
    \end{align*}
     then we have
    \begin{align*}
        \Delta \mathcal{L}_{w_{t+1}^{\mathrm{Grams}}, w_t} \leq \Delta \mathcal{L}_{w_{t+1}^{\mathrm{C}}, w_t} \leq 0.
    \end{align*}
\end{theorem}

\begin{proof}
We define the index sets:
\begin{align*}
I^+ = & ~ \{i \in [d] : u_{t,i}, g_{t,i} \geq 0\}; \\
I^- = & ~ \{i \in [d] : u_{t,i}, g_{t,i} < 0\}.
\end{align*}

By Part 1. of Lemma~\ref{lem:delta_l_grams}, we have
\begin{align}
    \Delta \mathcal{L}_{w_{t+1}^{\mathrm{Grams}}, w_t} \leq - \eta_t \langle |g_t|, |u_t| \rangle + \frac{L \eta_t^2}{2} \| u_t \|_2^2.  \label{eq:delta_l_grams_tmp}
\end{align}

By Part 2. of Lemma~\ref{lem:delta_l_c}, we have
\begin{align}
    \Delta \mathcal{L}_{w_{t+1}^{\mathrm{C}}, w_t} 
    \geq & ~ -\eta_t \langle u_t \circ g_t, {\bf 1}_{u_t \circ g_t \geq 0}\rangle. \label{eq:delta_l_c_tmp}
\end{align}

Then we can show that
\begin{align*}
    \Delta \mathcal{L}_{w_{t+1}^{\mathrm{Grams}}, w_t} - \Delta \mathcal{L}_{w_{t+1}^{\mathrm{C}}, w_t} \leq & ~ -\eta_t\langle |g_t|, |u_t| \rangle + \eta_t \langle u_t \circ g_t,{\bf 1}_{u_t \circ g_t \geq 0} \rangle + \frac{L\eta_t^2}{2}\|u_t\|^2_2\\
    = & ~ -\eta_t\sum_{i=1}^d |u_{t, i}||g_{t, i}| + \eta_t\sum_{i \in I^+} u_{t, i}g_{t, i} + \frac{L\eta_t^2}{2}\|u_t\|^2_2\\
    = & ~ -\eta_t\sum_{i \in I^+} |u_{t, i}||g_{t, i}| - \eta_t\sum_{i \in I^-} |u_{t, i}||g_{t, i}| + \eta_t\sum_{i \in I^+} u_{t, i}g_{t, i}+ \frac{L\eta_t^2}{2}\|u_t\|^2_2 \\
    = & ~ -\eta_t\sum_{i \in I^+} u_{t, i}g_{t, i} - \eta_t\sum_{i \in I^-} |u_{t, i}||g_{t, i}| + \eta_t\sum_{i \in I^+} u_{t, i}g_{t, i}+ \frac{L\eta_t^2}{2}\|u_t\|^2_2 \\
    = & ~  - \eta_t\sum_{i \in I^-} |u_{t, i}||g_{t, i}| + \frac{L\eta_t^2}{2}\|u_t\|^2_2 
\end{align*}
where the first step follows from Eq.~\eqref{eq:delta_l_c_tmp} and Eq.~\eqref{eq:delta_l_grams_tmp}, the second step expands vectors element-wise, the third step follows from that $[d]$ is the disjoint union of $I^+$ and $I^-$, the fourth step follows from that $|u_{t, i}||g_{t, i}| = u_{t, i}g_{t, i}$ for $i \in I^+$, and the last step follows from basic algebra.

To ensure $\Delta \mathcal{L}_{w_{t+1}^{\mathrm{Grams}}, w_t} - \Delta \mathcal{L}_{w_{t+1}^{\mathrm{C}}, w_t} \leq 0$, it suffices to have
\begin{align*}
    - \eta_t\sum_{i \in I^-} |u_{t, i}||g_{t, i}| + \frac{L\eta_t^2}{2}\|u_t\|^2_2  \leq 0.
\end{align*}
Rearranging the above inequality gives
\begin{align*}
    \eta_t \leq &~ \frac{2}{L\|u_t\|_2^2}\sum_{i \in I^-} |u_{t,i}||g_{t,i}| \\ = &~ \frac{2}{L\|u_t\|_2^2} \langle g_t \circ u_t, {\bf 1}_{u_t \circ g_t < 0}),
\end{align*}
where the last step follows from the definition of $I^-$ and basic algebra.

Note that by Part 3 of Lemma~\ref{lem:delta_l_c}, if $\eta_t \leq \frac{2}{L\|u_t\|_2^2} \langle g_t \circ u_t, {\bf 1}_{g_t \circ u_t \geq 0} \rangle$, we have $\mathcal{L}_{w_{t+1}^{\mathrm{C}}, w_t} \leq 0$. 
\end{proof}

\section{Hamiltonian Dynamics} \label{app:ham}

\begin{definition}[Section 2.1 from~\cite{lcll24}] \label{def:ham_des}
Momentum-based algorithms can be typically viewed as monotonic descending algorithms on an augmented loss $H(W, S)$, which satisfies $\min_{S}H(W,S) = \mathcal{L}(W),$ so that minimizing $\mathcal{L}(W)$ is equivalent to minimizing $H(W,S)$. A typical choice is
\begin{align*}
    H(w,s) = \mathcal{L}(w) + \mathcal{K}(s),
\end{align*}
where $\mathcal{K}(\cdot)$ is any lower bounded function. 
The continuous-time form of most momentum-based algorithms can be written into a Hamiltonian descent form:
\begin{align}
    \frac{\d}{\d t}w_t = & ~ -\nabla\mathcal{K}(s_t) - \Phi_t(\nabla\mathcal{L}(w_t)) \notag \\
    \frac{\d}{\d t}s_t = & ~ \nabla\mathcal{L}(w_t) - \Psi_t(\nabla\mathcal{K}(s_t))  \label{equ:hd}
\end{align}
where  $H(W, S)$ is a Hamiltonian (or Lyapunov) function 
that satisfies 
\begin{align*}
    \min_{S} H(W, S) = \mathcal{L}(W), ~~~~\forall W, 
\end{align*}
so that minimizing $\mathcal{L}(W)$ reduces to minimizing $H(W, S)$;
and $\Phi(\cdot), \Psi(\cdot)$ are two monotonic mappings satisfying 
\begin{align*}
\langle x,  \Phi(x)\rangle\geq0, && 
\langle x,  \Psi(x)\rangle\geq0, &&
\forall x \in X.
\end{align*}
With $\Phi(X) = \Psi(X) = 0$, the system in \text{\eqref{equ:hd}} reduces to the standard Hamiltonian system that keeps  $H(W_t, S_t) = const$ along the trajectory. 
When adding the descending components with $\Phi$ and $\Psi$, the system then keeps $H(W, S)$ monotonically decreasing:
\begin{align*}
    \frac{\d}{\d t}H(w_t,s_t) = \Delta_H(w_t,s_t) \leq 0,
\end{align*}
where 
\begin{align}
     \Delta_H(w_t,s_t) := -\langle x, \Phi(x)\rangle - \langle x, \Psi(x) \rangle. \label{eq:hm_delta_h}
\end{align}

On the other hand, $\mathcal{L}(w)$, which is the true objective, is not necessarily decreasing monotonically.
\begin{align*}
    \frac{\d}{\d t}\mathcal{L}(w_t) = -\Delta_\mathcal{L}(w_t,s_t),
\end{align*}
where
\begin{align}
    \Delta_\mathcal{L}(w_t,s_t) := \langle \nabla \mathcal{L}(w_t), \nabla \mathcal{K}(s_t)\rangle + \langle \nabla \mathcal{L}(w_t), \Phi_t(\nabla \mathcal{L}(w_t)\rangle. \label{eq:hm_delta_l}
\end{align}
\end{definition}

\begin{theorem}[Theorem 2.3 in~\cite{lcll24}]\label{thm:ham_c}
    For Hamiltonian dynamics of Cautious optimizer (in Definition~\ref{def:cautious_update}), we have:
    \begin{align*}
        \Delta^{\text{C}}_{H}(w_t, s_t) := \frac{\d}{\d t} H(w_t,s_t) = & ~ \langle x_t,  \mathbf{1} - \mathbf{1}_{x_t > 0} \rangle - \Delta_{H}(w_t,s_t).\\
        \Delta^{\text{C}}_{\mathcal{L}}(w_t) := \frac{\d}{\d t} \mathcal{L}(w_t) = & ~ -\langle x_t, \mathbf{1}_{x_t > 0} \rangle - \langle \nabla \mathcal{L}(w_t), \Phi_t(\nabla\mathcal{L}(w_t)) \rangle \\
        = & ~ \langle x_t, \mathbf{1} - \mathbf{1}_{x_t > 0} \rangle - \Delta_{\mathcal{L}}(w_t,s_t).
    \end{align*}
    where $\Delta_{H_t}(w_t, s_t)$ and $\Delta_{\mathcal{L}_t}(w_t)$ represent the decreasing rates of $H$ and $\mathcal{L}$ in accordance with the system in Definition~\ref{def:ham_des}.
    
    Hence:
\begin{itemize}
   \item If $\langle x_t,(\mathbf{1}_d - \mathrm{sign}(x_t)) \rangle \leq 0$ for any $x \in \mathbb{R}^d$, then both $H$ and $\mathcal{L}$ decrease faster than the original system:
   \begin{align*}
       \Delta^{\text{C}}_{H}(w_t, s_t) \leq & ~ -\Delta_{H_t}(w_t, s_t) \leq 0, \\
       \Delta^{\text{C}}_{\mathcal{L}}(w_t) \leq & ~ -\Delta_{\mathcal{L}_t}(w_t, s_t).
   \end{align*}
   
   \item If $\langle x_t,\mathrm{sign}(\nabla\mathcal{L}(w_t)) \rangle \geq 0$ for any $x \in \mathbb{R}^d$, then $\mathcal{L}$ decreases monotonically:
   \begin{align*}
       \Delta^{\text{C}}_{\mathcal{L}}(w_t) \leq 0.
   \end{align*}
\end{itemize}
\end{theorem}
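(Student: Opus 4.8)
The plan is to differentiate $H$ and $\mathcal{L}$ along the continuous-time trajectory of the cautious system and compare, coordinate by coordinate, against the base Hamiltonian-descent system of Definition~\ref{def:ham_des}. The only structural difference between the two systems is that the drift of the weight $w_t$ in the cautious system is the base drift $\dot w_t^{\mathrm{base}}$ multiplied entrywise by the alignment mask $\Phi_t\in\{0,1\}^d$, while the momentum variable $s_t$ evolves exactly as before; so the entire argument reduces to tracking what that single extra factor does to the two time derivatives.

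First I would write, via the chain rule, $\frac{\d}{\d t}H(w_t,s_t)=\langle\nabla_w H,\dot w_t\rangle+\langle\nabla_s H,\dot s_t\rangle$. The $s$-term is identical to that of the base system. For the $w$-term, use the decomposition $\dot w_t=\dot w_t^{\mathrm{base}}\circ\Phi_t=\dot w_t^{\mathrm{base}}-\dot w_t^{\mathrm{base}}\circ(\mathbf{1}-\Phi_t)$, so that $\langle\nabla_w H,\dot w_t\rangle$ equals its base value minus $\langle\nabla_w H,\dot w_t^{\mathrm{base}}\circ(\mathbf{1}-\Phi_t)\rangle$. Re-assembling with the (unchanged) $s$-term recovers exactly $-\Delta_H(w_t,s_t)$, and by Fact~\ref{fac:hadamard_inner_product} the leftover correction equals $\langle(\nabla_w H)\circ\dot w_t^{\mathrm{base}},\,\mathbf{1}-\Phi_t\rangle$. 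Identifying $x_t$ with the Hadamard product sitting inside that bracket --- the same product whose sign defines $\Phi_t$, up to sign conventions --- rewrites the correction as $\langle x_t,\mathbf{1}-\mathbf{1}_{x_t>0}\rangle$, which is the first displayed identity.

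The $\mathcal{L}$ computation follows the same skeleton applied to $\frac{\d}{\d t}\mathcal{L}(w_t)=\langle\nabla\mathcal{L}(w_t),\dot w_t\rangle$: split the masked drift into the coordinates kept by $\Phi_t$ and those dropped. The kept coordinates produce $-\langle x_t,\mathbf{1}_{x_t>0}\rangle$, while the gradient-aligned part of the base drift that survives contributes $-\langle\nabla\mathcal{L}(w_t),\Phi_t(\nabla\mathcal{L}(w_t))\rangle$, giving the first form of the second identity; regrouping ``kept $=$ full $-$ dropped'' and recalling that $\frac{\d}{\d t}\mathcal{L}=-\Delta_{\mathcal{L}}$ for the base system yields the equivalent form $\langle x_t,\mathbf{1}-\mathbf{1}_{x_t>0}\rangle-\Delta_{\mathcal{L}}(w_t,s_t)$. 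With both identities in hand, the two bullets are sign arithmetic of the kind already packaged in Facts~\ref{lem:ab_a_b} and \ref{lem:ab_absab_mask}: since $\langle x_t,\mathbf{1}-\mathrm{sign}(x_t)\rangle$ collects only the negative coordinates of $x_t$ it is non-positive, so the correction term in each identity is non-positive and $\Delta^{\mathrm{C}}_H\leq-\Delta_H\leq 0$, $\Delta^{\mathrm{C}}_{\mathcal{L}}\leq-\Delta_{\mathcal{L}}$; and under $\langle x_t,\mathrm{sign}(\nabla\mathcal{L}(w_t))\rangle\geq 0$, reading off the first form of the $\mathcal{L}$-identity shows the kept-coordinate term $-\langle x_t,\mathbf{1}_{x_t>0}\rangle$ and the masked-gradient term $-\langle\nabla\mathcal{L}(w_t),\Phi_t(\nabla\mathcal{L}(w_t))\rangle$ are non-positive, hence $\Delta^{\mathrm{C}}_{\mathcal{L}}\leq 0$.

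I expect the main obstacle to be the bookkeeping around $x_t$ and $\Phi_t$: one must fix, consistently, the sign convention relating $x_t$ to $\dot w_t^{\mathrm{base}}$, decide whether the relevant inner product is taken against $\nabla_w H$ or $\nabla\mathcal{L}(w_t)$, and confirm the precise meaning of $\Phi_t(\cdot)$ from Definition~\ref{def:ham_des}, so that the signs claimed for the $\Phi_t$-terms in the bullets are actually the ones that hold. One also has to check that the ``full $=$ kept $+$ dropped'' split is applied coherently in both time derivatives --- in particular that the mask acts only on the $w$-drift, leaving the $s$-drift equal to the base one so that $\Delta_H$ and $\Delta_{\mathcal{L}}$ reappear unchanged. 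Once that setup is nailed, the remainder is a routine chain-rule calculation together with the elementary sign identities already proved in the excerpt.
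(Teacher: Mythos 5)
Note first that this paper does not actually prove Theorem~\ref{thm:ham_c}: it is imported verbatim as Theorem~2.3 of~\cite{lcll24}, so your attempt can only be compared with the standard derivation in that reference, whose skeleton (chain rule, ``kept $=$ full $-$ dropped'', elementary sign arithmetic) you have correctly identified, and your treatment of the two bullets is fine.

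The genuine gap is in how you specify the cautious dynamics, and it is not mere bookkeeping. You take $\dot w_t = \dot w_t^{\mathrm{base}}\circ \phi_t$ with $\dot w_t^{\mathrm{base}} = -\nabla\mathcal{K}(s_t)-\Phi_t(\nabla\mathcal{L}(w_t))$, i.e.\ the mask multiplies the \emph{entire} base drift. In the cautious Hamiltonian system the mask multiplies only the momentum part: $\dot w_t = -\phi_t\circ\nabla\mathcal{K}(s_t)-\Phi_t(\nabla\mathcal{L}(w_t))$, with $\phi_t=\mathbf{1}_{x_t>0}$, $x_t=\nabla\mathcal{L}(w_t)\circ\nabla\mathcal{K}(s_t)$, and $\dot s_t$ unchanged. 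Under your whole-drift masking, the correction term you isolate is $\langle \nabla\mathcal{L}(w_t)\circ\dot w_t^{\mathrm{base}},\,\mathbf{1}-\phi_t\rangle = -\langle x_t,\mathbf{1}-\phi_t\rangle-\langle \nabla\mathcal{L}(w_t)\circ\Phi_t(\nabla\mathcal{L}(w_t)),\,\mathbf{1}-\phi_t\rangle$, so the ``Hadamard product sitting inside that bracket'' is \emph{not} $x_t$ (wrong sign, plus an extra $\Phi_t$-dependent piece that does not vanish in general), and the exact identities $\Delta^{\mathrm{C}}_{H}=\langle x_t,\mathbf{1}-\mathbf{1}_{x_t>0}\rangle-\Delta_H$ and $\Delta^{\mathrm{C}}_{\mathcal{L}}=-\langle x_t,\mathbf{1}_{x_t>0}\rangle-\langle\nabla\mathcal{L}(w_t),\Phi_t(\nabla\mathcal{L}(w_t))\rangle$ do not come out of your decomposition; with your dynamics the second term would appear masked, as $\langle\nabla\mathcal{L}\circ\phi_t,\Phi_t(\nabla\mathcal{L})\rangle$. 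With the correct placement of the mask the computation is immediate: $\langle\nabla\mathcal{L},-\phi_t\circ\nabla\mathcal{K}\rangle=-\langle x_t,\phi_t\rangle=\langle x_t,\mathbf{1}-\phi_t\rangle-\langle\nabla\mathcal{L},\nabla\mathcal{K}\rangle$ by Fact~\ref{fac:hadamard_inner_product}, the cross terms $\pm\langle\nabla\mathcal{L},\nabla\mathcal{K}\rangle$ regroup with the untouched $\Phi_t,\Psi_t$ terms into $\Delta_H$ (resp.\ $\Delta_{\mathcal{L}}$), and both displayed identities follow; you gesture at this but leave the crux unresolved, partly because you overload $\Phi_t$ to denote both the $\{0,1\}^d$ mask and the monotone map of Definition~\ref{def:ham_des}. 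A minor further remark: once the identities are in place, your sign arithmetic shows $\langle x_t,\mathbf{1}-\mathbf{1}_{x_t>0}\rangle\le 0$ and $\langle\nabla\mathcal{L},\Phi_t(\nabla\mathcal{L})\rangle\ge 0$ hold unconditionally, so the hypotheses listed in the bullets play no role in your argument --- which is acceptable here, since the restated conditions are themselves vacuous as written, but worth flagging.
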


\begin{theorem}[Convergence of Grams Hamiltonian Dynamics, formal version of Theorem~\ref{thm:ham_grams:informal}] \label{thm:ham_grams}
Following the dynamics in Definition~\ref{def:grams_ham}, we have
\begin{align*}
   \Delta^{\text{Grams}}_{H}(w_t, s_t) := & ~ \frac{\d}{\d t}H(w_t, s_t) \leq 0,\\
   \Delta^{\text{Grams}}_{\mathcal{L}}(w_t) := & ~ \frac{\d}{\d t}\mathcal{L}(w_t) \leq - \Delta_{\mathcal{L}}(w_t, s_t),
\end{align*}
where $\Delta_{H_t}(w_t, s_t)$ and $\Delta_{\mathcal{L}_t}(w_t, s_t)$ represent the decreasing rates of $H$ and $\mathcal{L}$ in accordance with the system in Definition~\ref{def:ham_des}.
\end{theorem}

\begin{proof}
    Recall Eq.~\eqref{eq:hm_delta_h} and \eqref{eq:hm_delta_l}:
    \begin{align*}
    \Delta_H(w_t,s_t) := & ~ \langle \nabla \mathcal{L}(w_t), \Phi(\nabla \mathcal{L}(w_t))\rangle + \langle \mathcal{K}(s_t), \Psi(\mathcal{K}(s_t)) \rangle \\
    \Delta_\mathcal{L}(w_t,s_t) := & ~ \langle \nabla \mathcal{L}(w_t), \nabla \mathcal{K}(s_t) \rangle + \langle \nabla \mathcal{L}(w_t), \Phi_t(\nabla \mathcal{L}(w_t) \rangle.
    \end{align*}
    Following the dynamics in Definition~\ref{def:grams_ham}, we can calculate the derivative of $H(w_t,s_t)$ with respect to $t$:
    \begin{align*}
        \Delta^{\text{Grams}}_{H}(w_t, s_t) = & ~ \langle \nabla \mathcal{L}(w_t), \frac{\d}{\d t}w_t \rangle + \langle \nabla \mathcal{K}(s_t), \frac{\d}{\d t}s_t \rangle \\
        = & ~  \langle \nabla \mathcal{L}(w_t), -\mathrm{sign}(\nabla \mathcal{L}(w_t)) \circ |\nabla\mathcal{K}(s_t)| - \Phi_t(\nabla \mathcal{L}(w_t))\rangle \\ & ~ + \langle \mathcal{K}(s_t), \nabla \mathcal{L}(w_t) - \Psi_t( \nabla \mathcal{K}(s_t))\rangle \\
        = & ~ \langle \nabla \mathcal{L}(w_t), -\mathrm{sign}(\nabla \mathcal{L}(w_t)) \circ |\nabla\mathcal{K}(s_t)| \rangle + \langle \nabla \mathcal{K}(s_t), \nabla \mathcal{L}(w_t)\rangle - \langle \nabla \mathcal{L}(w_t), \Phi_t(\nabla \mathcal{L}(w_t))\rangle \\
        & ~ - \langle \nabla \mathcal{K}(s_t), \Psi_t(\nabla \mathcal{K}(s_t))\rangle \\
        = & ~ \langle \nabla \mathcal{L}(w_t), \nabla \mathcal{K}(s_t) \rangle - \langle |\nabla \mathcal{L}(w_t)|, |\nabla \mathcal{K}(w_t)| \rangle - \Delta_H(w_t,s_t) \\
        \leq & ~ 0,
    \end{align*}
    where the first step follows from the chain rule for the time derivative of the Hamiltonian $H$, the second step substitutes the dynamics from Definition~\ref{def:grams_ham}, the third step separates the inner products for clearer analysis, the fourth step follows the definition of $\Delta H(w_t,s_t)$ and Fact~\ref{fac:a_b}, and the last step follows Fact~\ref{fac:ab_a_b}, and $-\Delta H(w_t,s_t) \leq 0$.

    Then, we calculate the derivative of $\mathcal{L}(w_t)$ with respect to $t$.
    \begin{align*}
        \Delta^{\text{Grams}}_{\mathcal{L}}(w_t) = & ~ \langle \nabla \mathcal{L}(w_t), -\mathrm{sign}(\nabla \mathcal{L}(w_t)) \circ |\nabla\mathcal{K}(s_t)| - \Phi_t(\nabla \mathcal{L}(w_t)) \rangle \\
        = & ~ \langle \nabla \mathcal{L}(w_t), -\mathrm{sign}(\nabla \mathcal{L}(w_t)) \circ |\nabla\mathcal{K}(s_t)| \rangle - \langle \nabla \mathcal{L}(w_t), \Phi_t(\nabla \mathcal{L}(w_t)) \rangle \\
        = & ~ - \langle |\nabla \mathcal{L}(w_t)|, |\nabla \mathcal{K}(w_t)| \rangle - \langle \nabla \mathcal{L}(w_t), \Phi_t(\nabla \mathcal{L}(w_t)) \rangle \\
        = & ~ \langle \nabla \mathcal{L}(w_t), \nabla \mathcal{K}(s_t) \rangle - \langle |\nabla \mathcal{L}(w_t)|, |\nabla \mathcal{K}(w_t)| \rangle \\
        & ~ - (\langle \nabla \mathcal{L}(w_t), \Phi_t(\nabla \mathcal{L}(w_t)) \rangle + \langle \nabla \mathcal{L}(w_t), \nabla \mathcal{K}(s_t) \rangle) \\
        = & ~ \langle \nabla \mathcal{L}(w_t), \nabla \mathcal{K}(s_t) \rangle - \langle |\nabla \mathcal{L}(w_t)|, |\nabla \mathcal{K}(w_t)| \rangle - \Delta_{\mathcal{L}}(w_t,s_t)
    \end{align*}
    where the first step follows from the chain rule, and the second step separates the inner products. The third step follows Fact~\ref{fac:a_b}, the fourth step adds and subtracts the term $\langle \nabla\mathcal{L}(w_t), \nabla\mathcal{K}(s_t) \rangle$ simultaneously, the fifth step follows the definition of $\Delta_{\mathcal{L}}(w_t,s_t)$ from Eq.~\eqref{eq:delta_l:informal}.

    Since we know $\langle \nabla \mathcal{L}(w_t), \nabla \mathcal{K}(s_t) \rangle - \langle |\nabla \mathcal{L}(w_t)|, |\nabla \mathcal{K}(w_t)| \rangle \leq 0$ from Fact~\ref{fac:ab_a_b},
    \begin{align*}
        \langle \nabla \mathcal{L}(w_t), \nabla \mathcal{K}(s_t) \rangle - \langle |\nabla \mathcal{L}(w_t)|, |\nabla \mathcal{K}(w_t)| \rangle \leq - \Delta_{\mathcal{L}}(w_t,s_t)
    \end{align*}
Thus we complete the proof.
\end{proof}

\begin{theorem}[Convergence Comparison of Hamiltonian Dynamics between Grams and Cautious Optimizers, formal version of Theorem~\ref{thm:ham_cmp:informal}] \label{thm:ham_cmp}
    From Theorem~\ref{thm:ham_grams} and \ref{thm:ham_c}, recall $\Delta^{\text{Grams}}_{\mathcal{L}}(w_t)$ and $\Delta^{\text{C}}_{\mathcal{L}}(w_t)$:
    \begin{align*}
        \Delta^{\text{Grams}}_{\mathcal{L}}(w_t) \leq \Delta^{\text{C}}_{\mathcal{L}}(w_t).
    \end{align*}
\end{theorem}

\begin{proof}
    We calculate the difference between $\Delta^{\text{Grams}}_{\mathcal{L}}(w_t)$ and $\Delta^{\text{C}}_{\mathcal{L}}(w_t)$:
    \begin{align*}
         \Delta^{\text{Grams}}_{\mathcal{L}}(w_t) - \Delta^{\text{C}}_{\mathcal{L}}(w_t) = & ~ \langle \nabla \mathcal{L}(w_t), \nabla \mathcal{K}(s_t) \rangle - \langle |\nabla \mathcal{L}(w_t)|, |\nabla \mathcal{L}(w_t)| \rangle - \langle x_t, \mathbf{1} - \mathbf{1}_{x_t > 0} \rangle,
    \end{align*}
    where $x_t = \nabla \mathcal{L}(w_t) \circ \nabla \mathcal{K}(s_t)$. 

    By applying Fact~\ref{fac:ab_absab_mask}, we know:
    \begin{align*}
        \langle \nabla \mathcal{L}(w_t), \nabla \mathcal{K}(s_t) \rangle - \langle |\nabla \mathcal{L}(w_t)|, |\nabla \mathcal{K}(s_t)| \rangle - \langle x_t, \mathbf{1} - \mathbf{1}_{x_t > 0} \rangle \leq 0,
    \end{align*}
    with equality if all components of $\nabla \mathcal{L}(w_t) \circ \nabla \mathcal{K}(s_t) \geq 0$. 

    Thus:
    \begin{align*}
        \Delta^{\text{Grams}}_{\mathcal{L}}(w_t) - \Delta^{\text{C}}_{\mathcal{L}}(w_t) \leq 0,
    \end{align*}
    which implies:
    \begin{align*}
        \Delta^{\text{Grams}}_{\mathcal{L}}(w_t) \leq \Delta^{\text{C}}_{\mathcal{L}}(w_t).
    \end{align*}

    Thus we complete the proof.
\end{proof}

\section{Global Convergence} \label{app:g_conv}

\begin{theorem}[Convergence of Grams, formal version of Theorem~\ref{thm:convergence_grams:informal}] \label{thm:convergence_grams}
    Suppose that Assumptions~\ref{as:lower_bound},~\ref{as:bounded_gradient},~\ref{as:smooth} and~\ref{as:pl} hold. Given initial point $w_1$ with initial optimality gap $\Delta_1 := \mathcal L(w_1) - \mathcal L^* < \infty$, choose large an enough $G$ such that $G \geq \max\{\epsilon, 3\sqrt{L\Delta_1}\}$, a small enough fixed step size $\eta > 0$, and $\beta = \Theta(\eta G^{1/2})$. Consider that the weight $w_t$ is updated by Grams (Algorithm~\ref{alg:grams}) for each $t \in [T]$. Then we have
    \begin{align*}
       \mathcal{L}(w_T) - \mathcal{L}^* \leq \frac{4G}{\mu \eta T} (\mathcal{L}(w_1) - \mathcal{L}^*).
    \end{align*}
\end{theorem}

\begin{proof}
    Given initial weight $w_1$, we denote $w'_1, w'_2, \ldots, w'_T$ be the weights updated by Adam where $w'_1 := w_1$. By Lemma~\ref{lem:convergence_adam}, we have
    \begin{align}
    \label{eq:tmp_1}
      \frac{1}{T}\sum_{t=1}^T  \|\nabla \mathcal{L}(w'_t)\|_2^2 \leq \frac{8G\Delta_1}{\eta T}.
    \end{align}
    For each $t \in [T]$, we can show that
    \begin{align}
        \|\nabla \mathcal{L}(w'_t)\|_2^2 \geq & ~ 2\mu (\mathcal L(w'_t) - \mathcal L^*) \notag \\
        = & ~ 2\mu (\mathcal L(w'_t) -\mathcal L(w'_{t-1}) + \mathcal L(w'_{t-1}) - \mathcal L(w'_{t-2}) + \cdots + \mathcal L(w'_2) - \mathcal L(w'_1) + \mathcal L(w'_1)  - \mathcal L^*) \notag \\
        = &~ 2\mu (\Delta \mathcal{L}_{w_{t}', w_{t-1}} + \Delta \mathcal{L}_{w_{t-1}', w_{t-2}} + \cdots + \Delta \mathcal{L}_{w_{2}', w_1} + \mathcal L(w'_1)  - \mathcal L^*)
        \notag \\
        \geq &~ 2\mu (\Delta \mathcal{L}_{w_{t}, w_{t-1}} + \Delta \mathcal{L}_{w_{t-1}, w_{t-2}} + \cdots + \Delta \mathcal{L}_{w_{2}, w_1} + \mathcal L(w'_1)  - \mathcal L^*)
        \notag \\
        = &~ 2\mu (\Delta \mathcal{L}_{w_{t}, w_{t-1}} + \Delta \mathcal{L}_{w_{t-1}, w_{t-2}} + \cdots + \Delta \mathcal{L}_{w_{2}, w_1} + \mathcal L(w_1)  - \mathcal L^*) \notag \\
        \geq & ~ 2\mu (\mathcal L(w_t) -\mathcal L(w_{t-1}) + \mathcal L(w_{t-1}) - \mathcal L(w_{t-2}) + \cdots + \mathcal L(w_2) - \mathcal L(w_1) + \mathcal L(w_1)  - \mathcal L^*) \notag \\
         = & ~ 2\mu (\mathcal L(w_t)- \mathcal L^*) \label{eq:tmp_2},
    \end{align}
where the first step follows from Assumption~\ref{as:pl}, the second step follows from basic algebra, the third step follows from Definition~\ref{def:delta_l}, the fourth step follows form Theorem~\ref{thm:delta_loss}, the fifth step follows from $w'_1 = w_1$, the sixth step follows from Definition~\ref{def:delta_l}, and the last step follows from basic algebra.

Combining Eq.~\eqref{eq:tmp_1} and Eq.~\eqref{eq:tmp_2} gives
\begin{align*}
    \mathcal{L}(w_T) - \mathcal{L}^* \leq \frac{4G}{\mu \eta T} (\mathcal{L}(w_1) - \mathcal{L}^*).
\end{align*}
Thus we complete the proof.

\end{proof}

\section{Experiments Details} \label{app:exp}

For the Lion and C-Lion optimizers, we set the learning rate to $\frac{1}{10} \times \text{Adam learning rate}$, as recommended in~\cite{clh+24}.

\subsection{Pre-Training}
For the pre-training experiments with Llama 3.2 60M~\cite{dja+24}, we used the first $2,048,000$ rows of training data from the English section of the C4 dataset~\cite{rsr+20}. We used the first $10,000$ rows of validation data from the English section of the C4 dataset for evaluation. Table~\ref{tab:hyp_nlg_pre} provides a detailed summary of the hyperparameters employed.

\begin{table}[!ht]
    \centering
    \caption{Hyperparameters for Llama 3.2 60M pre-training experiments.}
    \begin{tabular}{c|c|c}
    \toprule
    Optimizers & \textbf{Grams/AdamW/CAdamW}&\textbf{Lion/CLion}\\
    \midrule
    \multicolumn{3}{c}{\textbf{Training}} \\
    \midrule
    Epoch & 1 & 1 \\
    Learning Rate & 6e-3 & 6e-4 \\
    Weight Decay & 0.0 & 0.0 \\
    Batch Size & 2048 & 2048 \\
    Model Precision & BF16 & BF16 \\
    Mix Precision & BF16\&TF32 & BF16\&TF32 \\
    Scheduler & Constant with warm-up & Constant with warm-up \\
    Warm-up Steps & 50 & 50 \\
    Grad Clipping & 1.0 & 1.0 \\
    $\beta_1$ & 0.9 & 0.9 \\
    $\beta_2$ & 0.95 & 0.95 \\
    $\epsilon$ & 1e-6 & 1e-6 \\
    Seq-len & 256 & 256 \\
    \midrule
    \multicolumn{3}{c}{\textbf{Evaluating}} \\
    \midrule
    Precision & \multicolumn{2}{c}{BF16} \\
    Seq-len & \multicolumn{2}{c}{256} \\
    \bottomrule
    \end{tabular}
    \label{tab:hyp_nlg_pre}
\end{table}

For the computer vision experiments, we used the CIFAR-10 dataset~\cite{k09} to train and evaluate the WideResNet-50-2 model~\cite{zk16}. Table~\ref{tab:hyp_cv_pre} outlines the corresponding hyperparameters.

\begin{table}[!ht]
    \centering
    \caption{Hyperparameters for WideResNet-50-2 pre-training experiments.}
    \begin{tabular}{c|c|c}
    \toprule
    Optimizers & \textbf{Grams/AdamW/CAdamW}&\textbf{Lion/CLion}\\
    \midrule
    \multicolumn{3}{c}{\textbf{Training}} \\
    \midrule
    Epoch & 10 & 10 \\
    Learning Rate & 2e-3 & 2e-4 \\
    Weight Decay & 0.0 & 0.0 \\
    Batch Size & 128 & 128 \\
    Model Precision & FP32 & FP32 \\
    Mix Precision & None & None \\
    Scheduler & Linear & Linear \\
    Warm-up Steps & 100 & 100 \\
    Grad Clipping & 1.0 & 1.0 \\
    $\beta_1$ & 0.9 & 0.9 \\
    $\beta_2$ & 0.999 & 0.99 \\
    $\epsilon$ & 1e-6 & 1e-6 \\
    \midrule
    \multicolumn{3}{c}{\textbf{Evaluating}} \\
    \midrule
    Precision & \multicolumn{2}{c}{FP32} \\
    \bottomrule
    \end{tabular}
    \label{tab:hyp_cv_pre}
\end{table}

\subsection{Fine-Tuning}

For fine-tuning experiments of the Llama 3.2 1B model, Table~\ref{tab:hyp_nlg_ft} provides the detailed hyperparameters.

\begin{table}[!ht]
    \centering
    \caption{Hyperparameters for Llama 3.2 1B fine-tuning experiments.}
    \begin{tabular}{c|c}
    \toprule
    Optimizers & \textbf{Grams/AdamW/CAdamW}\\
    \midrule
    \multicolumn{2}{c}{\textbf{Training}} \\
    \midrule
    Epoch & 1 \\
    Learning Rate & 1e-4 \\
    Weight Decay & 0.0 \\
    Batch Size & 64 \\
    Model Precision & BF16 \\
    Mix Precision & BF16\&TF32 \\
    Scheduler & Cosine \\
    Warm-up Ratio & 0.03  \\
    Grad Clipping & 1.0 \\
    $\beta_1$ & 0.9 \\
    $\beta_2$ & 0.999  \\
    $\epsilon$ & 1e-6 \\
    Seq-len & 512  \\
    \midrule
    \multicolumn{2}{c}{\textbf{Evaluating}} \\
    \midrule
    Precision & BF16 \\
    Seq-len & 1024 \\
    \bottomrule
    \end{tabular}
    \label{tab:hyp_nlg_ft}
\end{table}

For PEFT of the Llama 3.2 3B model, Table~\ref{tab:hyp_nlg_ft} provides the detailed hyperparameters.

\begin{table}[!ht]
    \centering
    \caption{Hyperparameters for Llama 3.2 3B PEFT experiments.}
    \begin{tabular}{c|c}
    \toprule
    Optimizers & \textbf{Grams/AdamW/CAdamW}\\
    \midrule
    \multicolumn{2}{c}{\textbf{Training}} \\
    \midrule
    Epoch & 1 \\
    Learning Rate & 1e-4 \\
    Weight Decay & 0.0 \\
    Batch Size & 128 \\
    Model Precision & BF16 \\
    Mix Precision & BF16\&TF32 \\
    Scheduler & Cosine \\
    Warm-up Ratio & 0.03  \\
    Grad Clipping & 1.0 \\
    $\beta_1$ & 0.9 \\
    $\beta_2$ & 0.999  \\
    $\epsilon$ & 1e-6 \\
    Seq-len & 512  \\
    Rank & 128 \\
    SORSA~\cite{c24} $\gamma$ & 1e-3 \\
    \midrule
    \multicolumn{2}{c}{\textbf{Evaluating}} \\
    \midrule
    Precision & BF16 \\
    Seq-len & 2048 \\
    \bottomrule
    \end{tabular}
    \label{tab:hyp_nlg_peft}
\end{table}



\end{document}